\definecolor{n-red}{RGB}{228, 0, 43}   
\definecolor{n-blue}{RGB}{0, 60, 255}  
\definecolor{n-yellow}{RGB}{39, 71, 84} 
\definecolor{n-green}{RGB}{0, 128, 0} 
\theoremstyle{plain}
\newtheorem{theorem}{Theorem}
\theoremstyle{definition}
\newtheorem{definition}{Definition}
\theoremstyle{remark}
\title{Provable Ordering and Continuity in Vision-Language Pretraining for Generalizable Embodied Agents}
\author{%
  Zhizhen Zhang \\
  The University of Queensland \\
  \texttt{uqzzha39@uq.edu.au}
  \And
  Lei Zhu \\
  Tongji University \\
  \texttt{leizhu0608@gmail.com} \\
  \And
  Zhen Fang \\
  University of Technology Sydney \\
  \texttt{zhen.fang@uts.edu.au} \\
  \AND
  Zi Huang \\
  The University of Queensland \\
  \texttt{helen.huang@uq.edu.au} \\
  \And
  Yadan Luo \\
  The University of Queensland \\
  \texttt{y.luo@uq.edu.au} \\
}
\begin{document}

\doparttoc
\faketableofcontents 
\maketitle

\begin{abstract}
Pre-training vision-language representations on human action videos has emerged as a promising approach to reduce reliance on large-scale expert demonstrations for training embodied agents. However, prior methods often employ time contrastive learning based on goal-reaching heuristics, progressively aligning language instructions from the initial to the final frame. This overemphasis on future frames can result in \textit{erroneous} vision-language associations, as actions may terminate early or include irrelevant moments in the end. To address this issue, we propose \underline{Ac}tion \underline{T}emporal C\underline{o}herence \underline{L}earning (AcTOL) to learn \textit{ordered} and \textit{continuous} vision-language representations without rigid goal-based constraint. AcTOL treats a video as a continuous trajectory where it (1) contrasts semantic differences between frames to reflect their natural ordering, and (2) imposes a local Brownian bridge constraint to ensure smooth transitions across intermediate frames. Extensive imitation learning experiments on both simulated and real robots show that the pretrained features significantly enhance downstream manipulation tasks with high robustness to different linguistic styles of instructions, offering a viable pathway toward generalized embodied agents. Our project page is at \href{https://actol-pretrain.github.io/}{https://actol-pretrain.github.io/}.

\end{abstract}

\begin{figure*}[h]
\vskip -0.1in
\begin{center}
\centerline{\includegraphics[width=\linewidth]{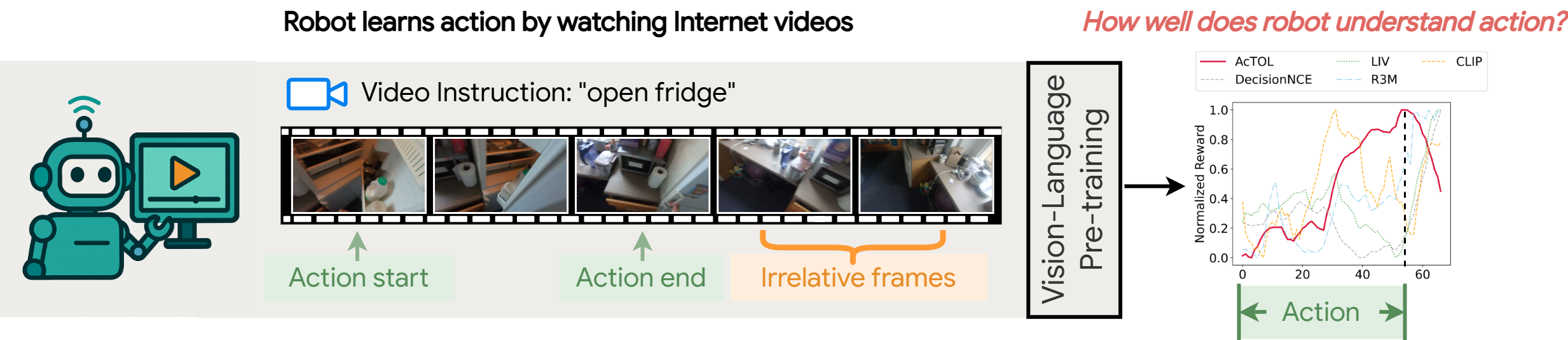}}
\caption{Pretraining on Internet human action videos for robot control, where the video-instruction pairs are noisy and often include irrelevant frames. The {\color{n-red} red} vision-language reward curve demonstrates AcTOL learns to correctly align instruction with action, outperforming previous goal-reaching methods in the presence of distracting content.}
\label{fig:case}
\end{center}
\vskip -0.3in
\end{figure*}

\section{Introduction}

The long-term vision for embodied intelligence \cite{DBLP:conf/nips/MuZHWDJWDQL23,DBLP:journals/corr/abs-2407-06886} is to create systems that seamlessly perceive and interact with the world around them. Achieving this requires agents that integrate vision and language to understand their surroundings, interpret human instructions, and autonomously plan actions for complex tasks. Current end-to-end approaches achieve policy learning through direct vision-language-action mapping~\cite{corl23-rt2,octo,corr24-gr2,corr24-openvla,pi-zero}. However, the inherent unpredictability of physical environments, including unseen scenarios and dynamic object interactions, constrains these solutions by requiring massive, high-quality robotic trajectories with action annotations, which are \textit{costly} to collect. To mitigate this, recent research has leveraged large-scale, readily available egocentric human action videos~\cite{DBLP:conf/iccv/GoyalKMMWKHFYMH17,corr18-epickitchen,cvpr22-ego4d} for \textit{pre-training}. Although these out-of-domain videos often lack low-level action details and contain noise, their diverse human-object interactions and task instructions provide valuable prior knowledge. This enables the pre-trained representations to be more effectively transferred to novel tasks with fewer demonstrations, reducing reliance on large-scale robotic datasets while preserving strong generalization capabilities. 

A promising approach for vision-language pre-training from human action videos leverages the concept of \textit{time contrastive learning}~\cite{icra18-tcn} to capture temporally consistent visual representations, where language serves as the guiding goal, with semantic alignment between the language and chronologically later frames in the video~\cite{corl22-r3m,icml23-liv,icml24-decisionnce}. However, this \textit{goal-reaching} semantic alignment approach relies on a rigid assumption that action videos adhere to a specific principle: \textit{actions progressively approach the target instruction from the initial frame to the final one}. Such assumption can be easily violated in egocentric human action videos, which are typically annotated at a coarse-grained level and riddled with noise.  Figure~\ref{fig:case} shows an example video-instruction pair, where the end of the video clip does not correspond to the actual end of the action. As a result, existing methods suffer from misleading semantic alignment, which hampers their ability to learn accurate vision-language relationships.

\begin{figure*}[t]
\vskip -0.2in
\begin{center}
\centerline{\includegraphics[width=\linewidth]{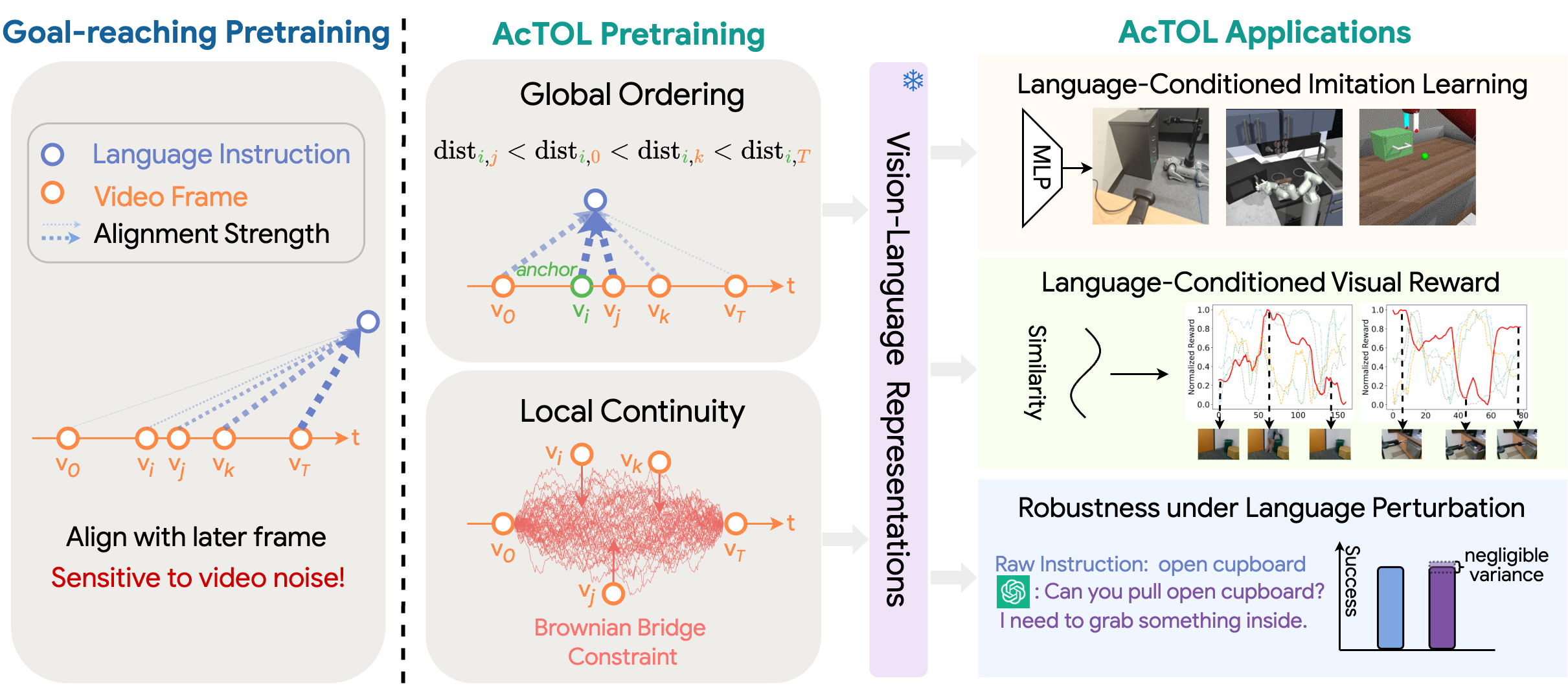}}
\caption{Comparison of existing \textit{goal-reaching} pre-training strategies and the proposed AcTOL approach. Our learned multi-modal representations can be effectively transferred to downstream language-conditioned robot manipulation tasks, exhibiting robustness to diverse instruction and linguistic variations.}
\label{fig:arch}
\end{center}
\vskip -0.4in
\end{figure*}

Given the challenges outlined above, a more natural and flexible pre-training strategy without rigid assumptions is needed to enhance vision-language representations for better policy learning. Building solely on the intrinsic temporal consistency of human action videos, we argue that the \textit{ordering} and \textit{continuity} of pre-trained vision-language representations play a crucial role in ensuring the effectiveness of policy learning. Ordering refers to the need for visual features to align with the underlying action logic required by the language instruction. For instance, as the task progresses, visual representations closer to the completion of the action should exhibit stronger alignment with the language instruction. This ensures that each step in the sequence is meaningfully associated with the corresponding instruction, enabling the model to effectively capture the dynamic progression of the task. Continuity, on the other hand, emphasizes that both visual features and their alignment with the language should evolve smoothly over time, with gradual transitions rather than abrupt changes. This is crucial because actions in the real world are not discrete but unfold continuously in time. Moreover, the alignment between visual and instruction should also be fluid, ensuring that as the action progresses, the visual representations consistently align with the target language instruction.

To address the aforementioned issues, as illustrated in Figure~\ref{fig:arch}, we propose \underline{Ac}tion \underline{T}emporal C\underline{o}herence \underline{L}earning (AcTOL), a novel approach designed to implicitly capture the ordering and continuity of video actions without relying on rigid assumptions, while providing strong theoretical guarantees. Unlike previous approaches that focus on goal-directed semantic alignment, AcTOL introduces a Vision-Language Ordering (VLO) loss. This loss leverages the intrinsic temporal coherence of videos, contrasting frames against each other based on their relative temporal distance, theoretically ensuring that the semantic alignment between frames reflects their temporal ordering and continuity throughout the entire sequence. However, the VLO loss does not explicitly enforce the continuity of the visual features themselves, and under conditions with variations in frame content and noise, it can lead to suboptimal local consistency of the visual features. To address this, AcTOL introduces a Brownian bridge constraint over the video, treating video frames as a Brownian bridge process. This approach imposes a structured, continuous flow on the visual representations, ensuring that the model learns more consistent and stable intermediate states, further enhancing the continuity of the visual representations and improving the stability of their alignment with language instruction. 
Further theoretical analysis suggests that these properties also contribute to the model's resilience to language perturbations, a crucial trait for real-world applications. 
To evaluate the generalization ability of AcTOL on embodied agents, we conducted extensive language-conditioned imitation learning experiments using both the real-world Unitree D1 robotic arm and two simulation environments. The results demonstrate that AcTOL significantly outperforms prior methods with a limited number of expert demonstrations. Additionally, AcTOL can generate language-conditioned visual rewards from real-world robot videos and remains robust to complex linguistic perturbations, highlighting its potential as a generalizable solution for real-world embodied agents.

\vspace{-1ex}

\section{Preliminaries}
\label{sec:preliminaries}
We first set up notations and mathematically formulate tasks.

\noindent\textbf{Language-Conditioned Imitation Learning (LC-IL)}. The task of LC-IL aims to train an agent to mimic expert behaviors from a given robot demonstration set $\mathcal{D}_{\text{robot}} = \{(\mathbf{\tau}_i,l_i)\}_{i=1}^{N_r}$, where $l_i \in \mathcal{L} $ represents a task-specific language instruction. Each trajectory $\mathbf{\tau}_i\in\mathcal{T}$ consists of a sequence of state-action pairs $\mathbf{\tau}_i = \{(\mathbf{s}_t, \mathbf{a}_t)\}_{t=1}^T$ of the horizon length $T$. In robot manipulation tasks, action $\mathbf{a}_t\in\mathcal{A}$ corresponds to the control commands executed by the agent and state $\mathbf{s}_t = [\mathbf{p}_t; \mathbf{o}_t] \in\mathcal{S}$ records proprioceptive data $\mathbf{p}_t$ (\textit{e.g.,} joint positions, velocities) and visual inputs $\mathbf{o}_t$ (\textit{e.g.,} camera images) at the time step $t$. The objective of LC-IL is to find an optimal language-conditioned policy $\pi^*(\mathbf{a}|\mathbf{s},l): \mathcal{S}\times\mathcal{L}\mapsto\mathcal{A}$ via solving the supervised optimization as follows,
\begin{equation}\nonumber
    \pi^* \in \arg\min_{\pi} \mathbb{E}_{(\tau_i, l_i)\sim \mathcal{T}} \left[ \frac{1}{T} \sum_{(\mathbf{s}_t, \mathbf{a}_t) \sim \tau_i} \ell(\pi(\hat{\mathbf{a}}_t, \mathbf{s}_t|l_i),  \mathbf{a}_t)\right],
\end{equation}
where \(\ell(\cdot, \cdot)\) is a task-specific loss, such as mean squared error or cross-entropy. Training the policy \(\pi_\theta\) in an end-to-end fashion may require \textit{hundreds} of high-quality expert demonstrations to converge, primarily due to the high variance of visual inputs $\mathbf{o}$ and language instructions $l$.


\paragraph{Vision-language Pre-training.}  Address such scalability issues can be achieved by leveraging large-scale, easily accessible human action video datasets~\cite{corr18-epickitchen,cvpr22-ego4d} $\mathcal{D}_{\text{human}} = \{(\mathcal{O}_i, l_i)\}_{i=1}^{N_h}$ , where $\mathcal{O}_i=\{o_t\}_{t=1}^T$ represents a video clip with $T$ frames and $l_i$ the corresponding description. Pretraining on such datasets enables policies to rapidly learn visual-language correspondences with minimal expert demonstrations. Mainstream pretraining methods employ \textit{time contrastive learning}~\cite{icra18-tcn} to fine-tune a visual encoder $\mathcal{\phi}$ and a text encoder $\mathcal{\varphi}$, which project frames and descriptions into a shared $d$-dimensional embedding space, \textit{i.e.}, $\mathbf{v}_t = \phi(o_t)\in\mathbb{R}^d$ and $\mathbf{l}_i = \varphi(l_i)\in\mathbb{R}^d$. To provide a unified perspective on various pretraining approaches, we formulate them within the objective $\mathcal{L}_{\operatorname{tNCE}}(\phi, \varphi)$: 

\vspace{-2ex}
\begin{align}\nonumber
\mathcal{L}_{\operatorname{tNCE}}&=
-\mathbb{E}_{\substack{\scriptstyle o^+\sim\textcolor{black}{\mathcal{P}(\mathcal{O}_i)}}}
    \log  
    \frac{
        \exp(\mathfrak{R}(\mathbf{v}^+, \mathbf{l}_i))
    }{
        \mathbb{E}_{\scriptstyle o^- \sim \textcolor{black}{\mathcal{N}(\mathcal{O}_i)}}
        \exp(\mathfrak{R}(\mathbf{v}^-, \mathbf{l}_i))
    },
\end{align}

where $\mathbf{v}^{+/-} = \phi(o^{+/-})$. Different pretraining strategies differ in their selection of (1) the positive frame set $\mathcal{P}(\mathcal{O}_i)$, (2) negative frame set $\mathcal{N}(\mathcal{O}_i)$; and (3) the semantic alignment scoring function $\mathfrak{R}(\mathbf{v}, \mathbf{l}_i)$ measuring the gap of VL similarities.


As motivated by goal-conditioned RL \cite{nips17-her}, current approaches \textit{explicitly} select future frames (\textit{e.g.}, R3M~\cite{corl22-r3m}, DecisionNCE~\cite{icml24-decisionnce}) or the last frame (\textit{e.g.}, LIV~\cite{icml23-liv}) as the goal within the positive frame set, enforcing their visual embedding to align with the semantics. Likewise, the scoring functions $\mathfrak{R}$ are often designed to maximize this transition direction. However, the pretraining action videos are \textit{noisy} as actions may terminate early or include irrelevant subsequent actions, which may mislead the encoders and result in inaccurate vision-language association. As detecting precise action boundaries is non-trivial, we argue for a more flexible approach that leverages \textit{intrinsic} characteristics of actions to guide pertaining.

\section{Our Approach: AcTOL} \label{sec:method}
We introduce an \underline{ac}tion \underline{t}emporal c\underline{o}herence \underline{l}earning (AcTOL) to capture two temporal properties of video actions: \textit{ordering} and \textit{continuity}. \textit{Ordering} was ensured in the vision-language ordering loss (Section \ref{sec:order}), where the semantic difference between frames reflects their temporal distance, with closer frames exhibiting smaller differences than those further apart. \textit{Continuity} requires smooth visual transitions between adjacent frames, avoiding abrupt changes and high variance. To achieve this, we model sampled frame intervals as a Brownian bridge process (Section \ref{sec:bb}), penalizing deviations from the expected trajectories. Different from prior works that relies on setting explicit goal frames, the proposed approach \textit{implicitly} explore the global and local structure of actions without imposing rigid constraints.

\subsection{Visual-Language Ordering}\label{sec:order}
\vspace{-1ex}
To capture the temporal coherence of video actions, we first propose a vision-language ordering (VLO) loss that ensures the semantic alignment between frames reflects their temporal order. Since the VLO loss is applied within each video individually, we henceforth write $\mathcal{O}_i,l_i$ as $\mathcal{O},l$ for simplicity. Consider an anchor frame \( o_i\in\mathcal{O}\) with an index  \( n(i) \) corresponding to its position in the original video. For any given frame pair \( (o_i, o_j) \), we first define the semantic alignment score $\mathfrak{R}$ to quantify differences in their VL similarities \textit{w.r.t} a language description \( l \) as:
\begin{equation}
\mathfrak{R}(\mathbf{v}_i, \mathbf{v}_j, \mathbf{l}) = -\Vert \operatorname{sim}(\mathbf{v}_i, \mathbf{l}) - \operatorname{sim}(\mathbf{v}_j, \mathbf{l}) \Vert_2,
\end{equation}
where $\mathbf{v}_i=\phi(o_i)$, $\mathbf{l}=\varphi(l)$. The function \( \operatorname{sim}(\cdot, \cdot) \) computes the VL similarity using cosine similarity. To ensure the proposed $\mathfrak{R}$ adhere to the temporal ordering of frames, we construct a negative set $\mathcal{N}_{i, j}$ by selecting $o_k\in \mathcal{O}$ correspond to frames that are temporally more \textit{distant} from $o_i$ than $o_j$:
\[
\mathcal{N}_{i, j} = \{ o_k \mid k \neq i, \ |n(i) - n(k)| \geq |n(i) - n(j)| \},
\]
This formulation allows us to reformulate $\mathcal{L}_{\operatorname{tNCE}}$ by enforcing that the VL similarity difference between frames $i$ and $j$ should be smaller than that between frame $i$ and any negative frame $k$ within the video $\mathcal{O}$:
\begin{equation}\nonumber
    \mathcal{L}_{\operatorname{VLO}}=
-\mathbb{E}_{(o_i, o_j)\sim \mathcal{O}}
 \log \frac{\exp\left( \mathfrak{R}(\mathbf{v}_i,\mathbf{v}_j,\mathbf{l}) \right)}{\sum_{o_k \in \mathcal{N}_{i, j}} \exp \left(\mathfrak{R}(\mathbf{v}_i,\mathbf{v}_k,\mathbf{l}) \right)}.
\end{equation}
Notably, our VLO loss does not strictly require $o_j$ to be from a future timestep for goal-reaching. Instead, we leverage the inherent temporal dynamics in videos, allowing the model to learn the natural ordering in an unsupervised manner.

\subsection{Vision-Language Continuity}
\label{sec:bb}\vspace{-1ex}
While the VLO property provides a strong global constraint on the structural alignment of VL pretraining, optimizing triplet relationships alone can be \textit{unstable}. Variations in frame content and noise often lead to \textit{suboptimal} local consistency. To mitigate this, we introduce an additional local continuity constraint inspired by the \textit{Brownian bridge} \cite{revuz2013continuous}. This stochastic process models transitions between two fixed endpoints over by any sampled local video interval $[n(i), n(j)]$. For any time step $t\in[n(i), n(j)]$ within this interval, the transition density of Brownian Bridge process $\mathbf{B}(t)$ follows a time-dependent Gaussian distribution:
\begin{equation}\nonumber
\mathcal{N}\left(\mathbf{v}_i + \frac{t - n(i)}{n(j) - n(i)}(\mathbf{v}_j - \mathbf{v}_i), \frac{t(n(j)-n(i))-t^2)}{n(j) - n(i)}\right),
\end{equation}
where \(\mathbf{v}_i,\mathbf{v}_j\in\mathbb{R}^d\) are the visual embeddings of the first and last frames in the sampled interval. The mean trajectory $\mathbb{E}[\mathbf{B}(t)]$ linearly interpolates between the two endpoints, while the variance  $\mathrm{Var}[\mathbf{B}(t)]$ provides uncertainty modeling that peaks in the middle of the interval. To enforce this local continuity, the Brownian bridge loss $\mathcal{L}_{\operatorname{BB}}$ is formulated as,
\begin{equation}
\mathcal{L}_{\mathrm{BB}} = \frac{1}{T} \sum_{t=1}^{T} \frac{1}{2 \mathrm{Var}[\mathbf{B}(t)]}\left\|\mathbf{v}_t - \mathbb{E}[\mathbf{B}(t)] \right\|_2^2.
\end{equation}
This loss encourages local consistency by penalizing deviations from expected trajectories, ensuring consistency across short temporal spans.

\noindent\textbf{Overall Objective.} The final training objective integrates both global and local constraints to achieve temporal coherence simultaneously:
\begin{equation}
\mathcal{L}_{\mathrm{AcTOL}} = \mathcal{L}_{\mathrm{VLO}} + \lambda \mathcal{L}_{\mathrm{BB}},
\end{equation}
where $\lambda$ is empirically set to balance two components.

\section{Theoretical Analysis}\vspace{-1ex}
In this section, we theoretically prove the vision-language ordering and continuity, as well as extend the robustness of linguistic perturbations of representations learned by AcTOL. All proofs are provided in Appendix~\ref{app:proof} for reference.

\noindent\textbf{Vision-Language Ordering.}  Ordering and sorting properties are well-established in self-supervised learning \cite{DBLP:conf/iccv/ShvetsovaPKSK23,DBLP:conf/iccv/HuSLRSS21,nips23-rnc}. Building upon these insights, we formalize the concept of vision-language ordering (VLO) below.

\begin{mdframed}[hidealllines=true,backgroundcolor=lightgray!30 ,innerleftmargin=3pt,innerrightmargin=3pt,leftmargin=-3pt,rightmargin=-3pt,innertopmargin=5pt]

\begin{definition}[VLO Representations]
\label{def:delta-ordered}
Let $\{o_i\}_{i\in[T]}$ be a sequence of video frames and $l$ the corresponding language description. The representations of the frames are said to satisfy the VLO property for any $0<\delta<1$ if $\forall i \in[T]$, and distinct frames $j, k \in[T] \backslash\{i\}$, the following conditions hold:
\[
\left\{
\begin{array}{ll}
\mathfrak{R}_{i,j,l} > \mathfrak{R}_{i,k,l} + 1/{\delta}, & \text{if } d_{i,j} < d_{i,k}, \\
\left|\mathfrak{R}_{i,j,l} - \mathfrak{R}_{i,k,l}\right| < \delta, & \text{if } d_{i,j} = d_{i,k}, \\
\mathfrak{R}_{i,j,l} < \mathfrak{R}_{i,k,l} - 1/{\delta}, & \text{if } d_{i,j} > d_{i,k},
\end{array}
\right.
\]
where $\mathfrak{R}_{i,j,l}$ denotes  $\mathfrak{R}\left(\mathbf{v}_i,\mathbf{v}_j,\mathbf{l}\right)$ and $d_{i,j}$ denotes $|n(i) - n(j)|$. 
\end{definition}

\end{mdframed}

\textbf{Implications of the VLO Property.} The VLO property enforces a structured representation of video frames, ensuring that temporally adjacent frames have consistent and predictable semantic differences. When two frames have equal temporal distances from an anchor frame, their semantic gaps should be similar, fostering smooth transitions. In contrast, frames that are farther apart should exhibit larger semantic gaps, thus preserving the chronological order.

To formalize the temporal ordering constraints, we define the unique \textit{sorted} set of frame distances from frame $i$ as $\{D_{i,1} < D_{i,2} < \cdots < D_{i,M_i}\}$, where each $D_{i,m}, m \in [M_i]$ is obtained by sorting the set $\{d_{i,j} \mid j \in [T] \setminus \{i\}\}$. Additionally, we define the count of frames at each distance level as:
\begin{equation}
    n_{i,m} := |\{j \mid d_{i,j} = D_{i,m}, \, j \in [T] \setminus \{i\}\}|, 
\end{equation}
which denotes the number of frames whose temporal distance from frame $i$ equals $D_{i,m}$. The VLO property is satisfied when the proposed $\mathcal{L}_{\mathrm{VLO}}$ approaches its theoretical lower bound, which is given by:
\begin{equation}
\mathcal{L}^*:=\frac{1}{T(T-1)} \sum_{i=1}^{T} \sum_{m=1}^{M_i} n_{i, m} \log n_{i, m}.
\end{equation}
This bound characterizes the optimal alignment of VL similarities, ensuring that the learned representations preserve the inherent temporal structure within the video sequence, as guaranteed by the following theorem:

\begin{mdframed}[hidealllines=true,backgroundcolor=lightgray!30 ,innerleftmargin=3pt,innerrightmargin=3pt,leftmargin=-3pt,rightmargin=-3pt,innertopmargin=5pt]

\begin{theorem}[Vision-Language Ordering]
\label{thm:delta-ordered}
$\mathcal{L}^*$ is a tight lower bound of $\mathcal{L}_{\mathrm{VLO}}$, \textit{i.e.}, $\mathcal{L}_{\mathrm{VLO}} \geq \mathcal{L}^*$, and for any $\epsilon > 0$, there exists feature embeddings such that $\mathcal{L}_{\mathrm{VLO}} < \mathcal{L}^* + \epsilon$. Furthermore, for any $0 < \delta < 1$, there exist $\epsilon > 0$ such that if $\mathcal{L}_{\mathrm{VLO}} < \mathcal{L}^* + \epsilon$, the learned representations satisfy the VLO property.
\end{theorem}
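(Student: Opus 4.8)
The plan is to treat $\mathcal{L}_{\mathrm{VLO}}$ as a function of the alignment scores $r_{ij}:=\mathfrak{R}_{i,j,l}$ and to exploit its InfoNCE-type structure, in which the numerator term already sits inside the denominator sum (note that $o_j\in\mathcal{N}_{i,j}$). Writing the uniform expectation over the $T(T-1)$ ordered pairs $(i,j)$ with $i\neq j$ explicitly gives
\[
\mathcal{L}_{\mathrm{VLO}}=\frac{1}{T(T-1)}\sum_{i=1}^{T}\sum_{j\neq i}\Bigl[\log\sum_{k\in\mathcal{N}_{i,j}}e^{r_{ik}}-r_{ij}\Bigr].
\]
First I would fix the anchor $i$ and reorganize the inner sum by distance level: every $j$ with $d_{i,j}=D_{i,m}$ shares the same negative set $\mathcal{N}_{i,m}=\{k\neq i: d_{i,k}\geq D_{i,m}\}$, so the inner contribution collapses to $\sum_{m=1}^{M_i}\bigl(n_{i,m}\log Z_{i,m}-P_{i,m}\bigr)$, where $E_{i,m}:=\sum_{k:d_{i,k}=D_{i,m}}e^{r_{ik}}$, $Z_{i,m}:=\sum_{m'\geq m}E_{i,m'}$, and $P_{i,m}:=\sum_{j:d_{i,j}=D_{i,m}}r_{ij}$.

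For the lower bound I would chain two elementary per-level inequalities: (i) $Z_{i,m}\geq E_{i,m}$, since the omitted terms are positive; and (ii) AM--GM on $\{e^{r_{ik}}\}_{k:d_{i,k}=D_{i,m}}$, which gives $E_{i,m}\geq n_{i,m}\exp(P_{i,m}/n_{i,m})$ and hence $n_{i,m}\log Z_{i,m}-P_{i,m}\geq n_{i,m}\log n_{i,m}$. Summing over $m$ and then over $i$ yields $\mathcal{L}_{\mathrm{VLO}}\geq\mathcal{L}^*$. For tightness I would exhibit a sequence of embeddings whose induced scores (up to an overall scaling, or equivalently in the limit of a vanishing temperature in the exponential) are constant within each distance level and diverge across levels, e.g.\ $r_{ik}=-\lambda(m-1)$ for $k$ at level $m$ with $\lambda\to\infty$: then inequality (ii) becomes an equality and the ratio $Z_{i,m}/E_{i,m}\to1$ in (i), so every per-level slack vanishes and $\mathcal{L}_{\mathrm{VLO}}\to\mathcal{L}^*$. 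I would flag here that with a literal bounded cosine similarity the scores are confined to a compact interval, so both the tightness claim and the $1/\delta$-separation in Definition~\ref{def:delta-ordered} are to be understood in this rescaled (temperature $\to0$) regime; this is the only place boundedness intervenes.

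For the third claim I would run the lower-bound chain in reverse, keeping track of slacks quantitatively. Each summand $\Delta_{i,m}:=n_{i,m}\log Z_{i,m}-P_{i,m}-n_{i,m}\log n_{i,m}$ is nonnegative, so $\mathcal{L}_{\mathrm{VLO}}-\mathcal{L}^*<\epsilon$ forces $\Delta_{i,m}\leq T(T-1)\epsilon$ for every $i,m$. This single small slack controls both steps: the AM--GM gap being small forces the scores $\{r_{ij}\}_{j:d_{i,j}=D_{i,m}}$ to be mutually close to their mean $\rho_{i,m}$, with an explicit modulus of continuity (a standard estimate bounding the spread of the arguments by the log-sum-exp-minus-mean gap), giving $|\mathfrak{R}_{i,j,l}-\mathfrak{R}_{i,k,l}|<\delta$ whenever $d_{i,j}=d_{i,k}$; and $\log(Z_{i,m}/E_{i,m})$ being small forces $E_{i,m'}/E_{i,m}$ to be small for every $m'>m$, which combined with the within-level concentration makes $\rho_{i,m}-\rho_{i,m'}$ arbitrarily large, hence $\mathfrak{R}_{i,j,l}>\mathfrak{R}_{i,k,l}+1/\delta$ whenever $d_{i,j}<d_{i,k}$ (and symmetrically for $d_{i,j}>d_{i,k}$). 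Choosing $\epsilon$ small enough as a function of $\delta$, $T$, and the multiplicities $n_{i,m}$ then yields the VLO property.

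I expect the quantitative reverse direction to be the main obstacle: the two slack sources --- the within-level AM--GM gap and the across-level drop $Z_{i,m}\geq E_{i,m}$ --- are intertwined, and one needs a clean, uniform modulus-of-continuity argument that simultaneously converts an $O(\epsilon)$ slack into an additive $\delta$-control inside each level and an additive $1/\delta$-separation between levels, valid across all anchors and all distance-multiplicity patterns. Everything else is either a short computation (the level-wise reorganization and the two inequalities for the lower bound) or the temperature-rescaling remark needed to make the tightness statement and Definition~\ref{def:delta-ordered} mutually consistent.
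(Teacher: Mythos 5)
Your proposal follows essentially the same route as the paper's proof: the same reorganization of the loss by distance level, the same two per-level inequalities (dropping the strictly-farther negatives, then Jensen/AM--GM within a level) for the lower bound, the same within-level-constant/across-level-separated construction for tightness, and the same two-slack analysis (within-level spread from the Jensen gap, cross-level separation from the $\log(Z_{i,m}/E_{i,m})$ gap) for the reverse direction, which the paper carries out explicitly via a monotonicity analysis of $h(\theta)$ and an explicit choice of $\epsilon(\delta,T,n_{i,m})$. Your caveat that bounded cosine similarity confines $\mathfrak{R}$ to $[-2,0]$ --- so that the tightness construction and the $1/\delta$-separation cannot hold literally without rescaling --- is a fair observation that the paper's proof does not address.
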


\end{mdframed}

\noindent\textbf{Vision-Language Continuity.} We establish the following theoretical result to rigorously describe continuity preservation in vision-language representations:

\begin{mdframed}[hidealllines=true,backgroundcolor=lightgray!30 ,innerleftmargin=3pt,innerrightmargin=3pt,leftmargin=-3pt,rightmargin=-3pt,innertopmargin=5pt]
\begin{theorem} 
[Vision-Language Continuity] 
\label{thm:continuity}
Let \(\mathbf{v}_k,\mathbf{v}_l\) be visual representations at arbitrary time steps within a Brownian Bridge-regularized interval \([n(i), n(j)]\), and let \(\mathbf{l} \in \mathcal{L}\) be a language embedding. If the VL similarity function \(\operatorname{sim}(\cdot)\) is Lipschitz continuous with constant \(C\), then for any \(\epsilon > 0\), there exists \(\delta > 0\) such that:
\[
\|\mathbf{v}_k -\mathbf{v}_l \|_2 < \delta \quad \Rightarrow \quad \left| \mathfrak{R}(\mathbf{v}_k,\mathbf{v}_l,\mathbf{l}) \right| < \epsilon.
\]
\end{theorem}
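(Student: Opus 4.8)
The plan is to reduce the claim to the Lipschitz property of $\operatorname{sim}(\cdot,\cdot)$, with the Brownian bridge regularizer playing the role of certifying that the hypothesis $\|\mathbf{v}_k-\mathbf{v}_l\|_2<\delta$ is actually realized for temporally close frames. First I would unfold the score function. Since $\operatorname{sim}(\mathbf{v},\mathbf{l})\in\mathbb{R}$ is scalar-valued,
\[
\bigl|\mathfrak{R}(\mathbf{v}_k,\mathbf{v}_l,\mathbf{l})\bigr|
= \bigl\|\operatorname{sim}(\mathbf{v}_k,\mathbf{l})-\operatorname{sim}(\mathbf{v}_l,\mathbf{l})\bigr\|_2
= \bigl|\operatorname{sim}(\mathbf{v}_k,\mathbf{l})-\operatorname{sim}(\mathbf{v}_l,\mathbf{l})\bigr|.
\]
Fixing $\mathbf{l}$ and viewing $\mathbf{v}\mapsto\operatorname{sim}(\mathbf{v},\mathbf{l})$ as a $C$-Lipschitz map, I would bound the right-hand side by $C\,\|\mathbf{v}_k-\mathbf{v}_l\|_2$. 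Taking $\delta:=\epsilon/C$ (the inequalities stay strict) yields exactly $\|\mathbf{v}_k-\mathbf{v}_l\|_2<\delta \Rightarrow |\mathfrak{R}(\mathbf{v}_k,\mathbf{v}_l,\mathbf{l})|<\epsilon$, which is the stated implication.

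Second, I would spell out how the Brownian bridge constraint makes this more than a vacuous $\epsilon$--$\delta$ statement, by turning temporal adjacency of $k,l$ into a quantitative bound on $\|\mathbf{v}_k-\mathbf{v}_l\|_2$. Every summand of $\mathcal{L}_{\mathrm{BB}}$ is nonnegative, so $\mathcal{L}_{\mathrm{BB}}\le\eta$ forces $\|\mathbf{v}_t-\mathbb{E}[\mathbf{B}(t)]\|_2\le\sqrt{2\,T\,\mathrm{Var}[\mathbf{B}(t)]\,\eta}$ at every $t$ in the interval. Decomposing
\[
\mathbf{v}_k-\mathbf{v}_l=\bigl(\mathbf{v}_k-\mathbb{E}[\mathbf{B}(k)]\bigr)+\bigl(\mathbb{E}[\mathbf{B}(k)]-\mathbb{E}[\mathbf{B}(l)]\bigr)+\bigl(\mathbb{E}[\mathbf{B}(l)]-\mathbf{v}_l\bigr)
\]
and using that the mean trajectory is the affine interpolant, so $\|\mathbb{E}[\mathbf{B}(k)]-\mathbb{E}[\mathbf{B}(l)]\|_2=\frac{|k-l|}{n(j)-n(i)}\|\mathbf{v}_j-\mathbf{v}_i\|_2$, the triangle inequality gives a bound on $\|\mathbf{v}_k-\mathbf{v}_l\|_2$ that vanishes as $k\to l$ and as $\eta\to0$. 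Chaining this with the Lipschitz estimate of the first step shows $|\mathfrak{R}(\mathbf{v}_k,\mathbf{v}_l,\mathbf{l})|$ is small whenever $k$ and $l$ are temporally close --- the continuity property in its stronger, effective form.

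The Lipschitz chaining itself is routine; the step that requires genuine care is justifying the Lipschitz hypothesis. When $\operatorname{sim}$ is cosine similarity, $\mathbf{v}\mapsto\operatorname{sim}(\mathbf{v},\mathbf{l})$ fails to be globally Lipschitz near $\mathbf{v}=\mathbf{0}$, so I would either appeal to the $\ell_2$-normalization applied to the encoder outputs (confining $\mathbf{v}$ to the unit sphere, where the gradient of $\operatorname{sim}$ is bounded and $C$ can be taken as a small explicit constant) or state the bounded-embedding-norm assumption under which $C<\infty$. This is really a modeling choice rather than a technical obstacle. A second, minor point is pinning down what ``regularized by the Brownian bridge constraint'' means quantitatively: I would phrase it as $\mathcal{L}_{\mathrm{BB}}\le\eta$ for small $\eta$ (with $\eta=0$ at the idealized optimum, setting aside the endpoint terms where $\mathrm{Var}[\mathbf{B}(t)]\to0$), which is what makes the bound in the second paragraph well-posed; with that convention the argument is complete.
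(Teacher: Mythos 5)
Your proposal is correct and uses the same two ingredients as the paper's proof---the Lipschitz bound $|\operatorname{sim}(\mathbf{v}_k,\mathbf{l})-\operatorname{sim}(\mathbf{v}_l,\mathbf{l})|\le C\|\mathbf{v}_k-\mathbf{v}_l\|_2$ and the decomposition of $\mathbf{v}_k-\mathbf{v}_l$ through the Brownian-bridge mean trajectory together with the variance bound $\mathrm{Var}[\mathbf{B}(t)]\le (n(j)-n(i))/4$---but you organize them differently, and your organization is the cleaner one. You note that the implication as literally stated follows from Lipschitz continuity alone with $\delta=\epsilon/C$, and you assign the Brownian-bridge material its proper role: certifying that the hypothesis $\|\mathbf{v}_k-\mathbf{v}_l\|_2<\delta$ is actually realized for temporally close frames. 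The paper instead chains everything into the single estimate $|\mathfrak{R}(\mathbf{v}_k,\mathbf{v}_l,\mathbf{l})|\le C\left(\tfrac{2|k-l|}{n(j)-n(i)}+\sqrt{n(j)-n(i)}\right)$ and ends by setting $\delta=\min\left(\tfrac{\epsilon(n(j)-n(i))}{4C},\tfrac{\epsilon^2}{4C^2}\right)$; but the conditions it derives there are constraints on $|k-l|$ and on the interval length, not on $\|\mathbf{v}_k-\mathbf{v}_l\|_2$, so the paper's concluding implication does not literally follow from its own chain of inequalities---what it actually establishes is the effective, temporal-distance version that you state as your second step. Your explicit handling of the two caveats (cosine similarity is Lipschitz only on the unit sphere or on a region bounded away from the origin, and ``regularized by the Brownian Bridge constraint'' must be quantified, e.g.\ as $\mathcal{L}_{\mathrm{BB}}\le\eta$ with care at the endpoints where $\mathrm{Var}[\mathbf{B}(t)]\to 0$) addresses precisely the points the paper glosses over with its parenthetical ``when embeddings are normalized as unit vectors'' and its unexplained deviation bound $\|\mathbf{v}_t-\mathbb{E}[\mathbf{B}(t)]\|_2^2\le\lambda\,\mathrm{Var}[\mathbf{B}(t)]$. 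In short, your argument proves the stated theorem and also makes explicit the stronger statement the paper's appendix is implicitly after.
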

\end{mdframed}

\noindent This result follows from two key observations: (i) Brownian Bridge regularization constrains each embedding to remain close to a linear interpolation between anchor frames, with deviations governed by a time-dependent variance; and (ii) under this constraint, the distance between temporally close frames admits an explicit upper bound. Combining this with the Lipschitz continuity of the vision-language similarity function ensures that small changes in frame embeddings lead to proportionally bounded changes in alignment scores. 

Building upon the continuity result, we further demonstrate that the semantic alignment score remains stable under small perturbations in language input:

\begin{mdframed}[hidealllines=true,backgroundcolor=lightgray!30 ,innerleftmargin=3pt,innerrightmargin=3pt,leftmargin=-3pt,rightmargin=-3pt,innertopmargin=5pt]
\begin{theorem}[Robustness to Language Variations]
\label{thm:robustness}
Let $\mathbf{l}'$ be a perturbed language embedding such that $\|\mathbf{l} - \mathbf{l}'\| \leq \delta_l$. Then the semantic alignment score $\mathfrak{R}$ satisfies:
\[
|\mathfrak{R}(\mathbf{v}_i, \mathbf{v}_j, \mathbf{l}') - \mathfrak{R}(\mathbf{v}_i, \mathbf{v}_j, \mathbf{l})| \leq 2C\delta_l.
\]
\end{theorem}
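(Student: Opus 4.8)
The plan is to unwind the definition of $\mathfrak{R}$ and reduce the statement to a pair of elementary inequalities. Recall that $\mathfrak{R}(\mathbf{v}_i,\mathbf{v}_j,\mathbf{l}) = -\|\operatorname{sim}(\mathbf{v}_i,\mathbf{l})-\operatorname{sim}(\mathbf{v}_j,\mathbf{l})\|_2$, and since $\operatorname{sim}(\cdot,\cdot)$ is the (scalar-valued) cosine similarity, this is simply $-|\operatorname{sim}(\mathbf{v}_i,\mathbf{l})-\operatorname{sim}(\mathbf{v}_j,\mathbf{l})|$. Hence the quantity to bound is $\big|\,|\operatorname{sim}(\mathbf{v}_i,\mathbf{l})-\operatorname{sim}(\mathbf{v}_j,\mathbf{l})| - |\operatorname{sim}(\mathbf{v}_i,\mathbf{l}')-\operatorname{sim}(\mathbf{v}_j,\mathbf{l}')|\,\big|$.

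First I would apply the reverse triangle inequality $\big||a|-|b|\big|\le|a-b|$ with $a = \operatorname{sim}(\mathbf{v}_i,\mathbf{l})-\operatorname{sim}(\mathbf{v}_j,\mathbf{l})$ and $b = \operatorname{sim}(\mathbf{v}_i,\mathbf{l}')-\operatorname{sim}(\mathbf{v}_j,\mathbf{l}')$, reducing the target to bounding $|a-b|$. Then I would regroup $a-b = \big(\operatorname{sim}(\mathbf{v}_i,\mathbf{l})-\operatorname{sim}(\mathbf{v}_i,\mathbf{l}')\big) - \big(\operatorname{sim}(\mathbf{v}_j,\mathbf{l})-\operatorname{sim}(\mathbf{v}_j,\mathbf{l}')\big)$ and use the ordinary triangle inequality to split it into the two per-frame deviations $|\operatorname{sim}(\mathbf{v}_i,\mathbf{l})-\operatorname{sim}(\mathbf{v}_i,\mathbf{l}')| + |\operatorname{sim}(\mathbf{v}_j,\mathbf{l})-\operatorname{sim}(\mathbf{v}_j,\mathbf{l}')|$.

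Finally, invoking the Lipschitz continuity of $\operatorname{sim}(\cdot)$ in its language argument with constant $C$ --- the same hypothesis already used in Theorem \ref{thm:continuity} --- each of the two terms is bounded by $C\|\mathbf{l}-\mathbf{l}'\|\le C\delta_l$, and summing yields the claimed bound $2C\delta_l$. If one prefers to justify the Lipschitz property directly, note that on the unit sphere the map $\mathbf{u}\mapsto\langle\mathbf{u},\cdot\rangle$, and hence the cosine similarity, is $1$-Lipschitz in each argument, so $C$ can be taken to be $1$ after normalization.

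I do not expect a genuine obstacle here: the only points requiring care are the bookkeeping around the $\|\cdot\|_2$ notation collapsing to an absolute value on scalars, and ensuring the Lipschitz hypothesis is applied with respect to perturbations of $\mathbf{l}$ rather than of the visual embeddings. Note that this result is independent of the Brownian bridge regularization and the VLO machinery --- it is purely a stability property of the scoring function $\mathfrak{R}$ itself --- so no earlier theorem is strictly needed beyond the shared Lipschitz assumption; the connection to continuity is that the same $\mathfrak{R}$ that is shown to vary smoothly in the visual inputs also varies smoothly in the language input.
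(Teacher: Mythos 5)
Your proposal is correct and matches the paper's proof essentially step for step: the same reverse-triangle-inequality reduction, the same regrouping into two per-frame deviations, and the same application of the Lipschitz continuity of $\operatorname{sim}$ in the language argument to obtain the $2C\delta_l$ bound. The extra remarks (that $C$ can be taken as $1$ for normalized embeddings, and that the result is independent of the Brownian bridge machinery) are accurate but not part of the paper's argument.
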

\end{mdframed}

\noindent This second result guarantees that small shifts in the language representation (\textit{e.g.}, synonym substitution or phrasing variation) lead to bounded changes in the alignment score. Together, Theorems~\ref{thm:continuity} and \ref{thm:robustness} formalize the local stability of semantic grounding across both time and modality, providing a theoretical basis for continuity-aware vision-language learning.
\vspace{-1ex}

\section{Experiment}\vspace{-1ex}\label{sec:exp}
In our experiments, we aim to evaluate the effectiveness of ordered and continuous vision-language representations for robotic control. First, we conduct extensive Language-Conditioned Behavior Cloning (LCBC) experiments on both real and simulated robots to validate the importance of ordering and continuity for imitation learning. Second, we assess the utility of the learned representations as reward functions on multiple real-world action videos. The results demonstrate that the ordered and continuous representations enable our method to accurately identify action boundaries and generate dense rewards aligned with the given instructions. Finally, we evaluate the robustness of our method under language perturbations, showcasing its strong generalization capability for application in real-world daily scenarios.

\begin{figure}[t]
\vspace{-2em} 
\begin{center}
\centerline{\includegraphics[width=\linewidth]{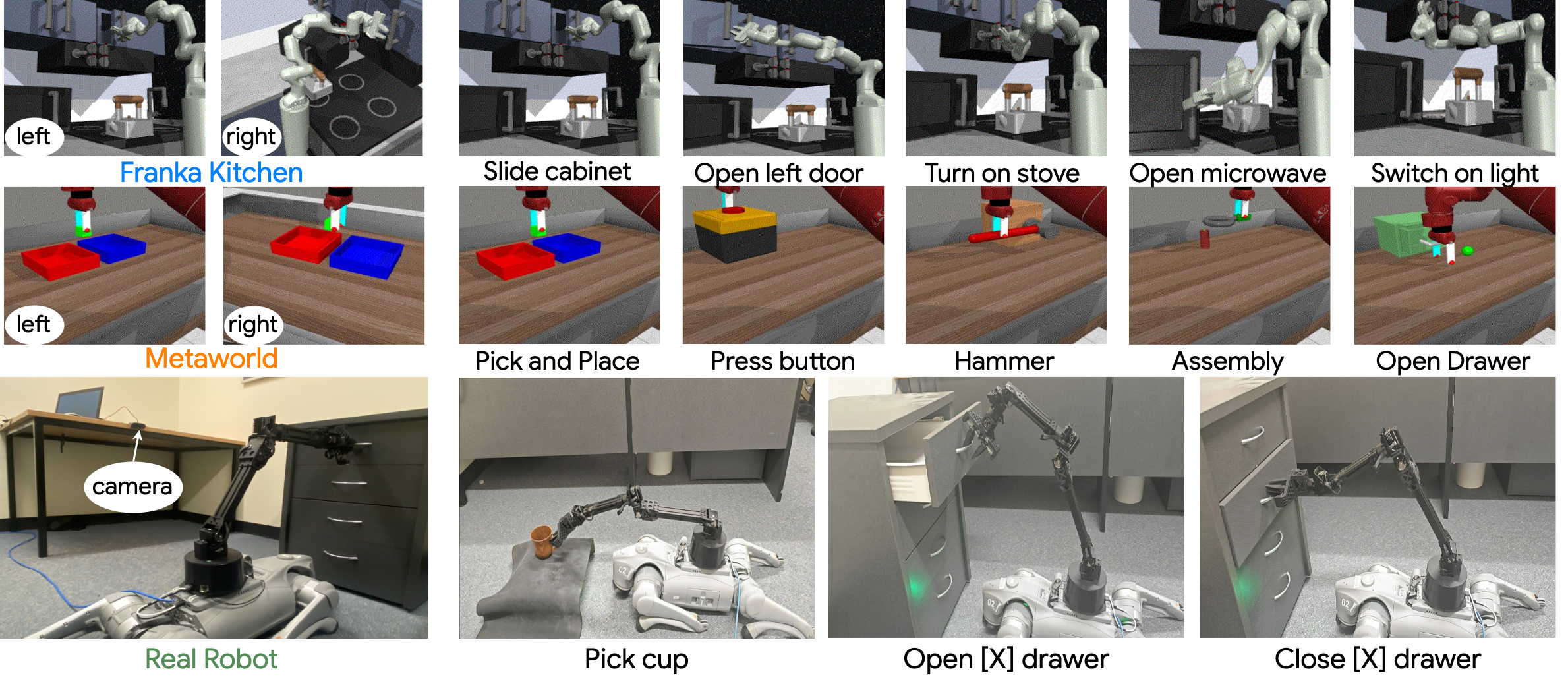}}
\caption{Policy learning environments, including 3 tasks with a real-world Unitree D1 robot arm and 5 tasks each in two simulation environments, \textit{i.e.,} Franka Kitchen and Metaworld. }
\label{env}
\end{center}
\vspace{-2em}
\end{figure}

\noindent\textbf{Experimental Setups.}
Figure~\ref{env} shows the experimental environments. For real-world robot evaluation, we deploy the \textbf{Unitree D1 robot arm} to perform three challenging manipulation tasks: pick cup, open [X] drawer and close [X] drawer, where [X] is the drawer index specified by the instruction. The pick cup task requires the model to accurately identify the cup handle, while the open/close [X] drawer tasks demand grounding of language instructions to visual observations, enabling the model to interact with the correct drawer. To isolate manipulation performance, the Unitree Go2 quadruped remains lying down and stationary throughout the evaluation. We use a web camera to capture a third-person view as visual observation. The action space consists of a 6-DoF end-effector displacement vector and gripper state, executed at a control frequency of 20 Hz. For each task, we collect 60 demonstrations via remote control using the Unitree Go app, which is significantly fewer than the 100 trajectories typically used in prior work~\cite{icml23-liv,icml24-decisionnce}.
For simulation, we choose two widely used simulation environments for evaluation: \textbf{Franka Kitchen}~\cite{corl19-franka,corr20-franka} and \textbf{Metaworld}~\cite{corl19-metaworld}. For Franka Kitchen, we evaluate five tasks: sliding a cabinet, opening the left door, opening the microwave, turning on the stove, and switching on the light. For Metaworld, we focus on learning five tasks: hammering a nail, pressing a button, picking and placing a block, assembling a ring onto a peg, and opening a drawer. Detailed environment setup can be found at Appendix~\ref{sec:app_env}.

\noindent\textbf{Baselines.} Since our model is initialized with \textbf{CLIP}~\cite{icml21-clip}, a state-of-the-art image-text representation widely applied in various embodied tasks~\cite{l4dc22-clipapp2,cvpr22-clipapp3,corl21-clipapp1,nips22-clipapp4}, it is a natural choice to include CLIP as a vanilla baseline for comparison. Our primary baselines are \textbf{LIV}~\cite{icml23-liv} and \textbf{DecisionNCE}~\cite{icml24-decisionnce}, as we use the same model architecture and dataset for pre-training. We also compare against \textbf{R3M}~\cite{corl22-r3m} pre-trained on Ego4D~\cite{cvpr22-ego4d}, a dataset containing roughly $36 \times$ longer videos than EPIC-KITCHEN-100. We also include an ablation variant of AcTOL where the Brownian Bridge loss is removed, referred to as AcTOL w/o BB.

\noindent\textbf{Implementation Details.} We initialize our model with the weights of CLIP~\cite{icml21-clip} with ResNet-50 vision backbone and further pre-train it on human action video dataset EPIC-KITCHEN-100~\cite{corr18-epickitchen,corr20-epickitchen}. For hyperparameter selection, we uniformly sample 10 frames of each video per batch. The loss weight \( \lambda \) is 0.1. Other hyperparameters, such as temperature,s follow the default value used in CLIP~\cite{icml21-clip}. More details of pre-training and hyperparameter sensitivity can be found in Appendix~\ref{sec:pretrain_details}.




\begin{table*}[t]
\vspace{-1em}
\centering
\caption{Comparison in simulation environments with varying amounts of demonstrations. Each result reports the success rate over 50 roll-outs, averaged across 2 camera views and 3 random seeds. We also report the relative performance gain in \textcolor{n-green}{green} compared to the \textit{strongest} baseline.}\vspace{-1ex}
\label{tab:performance_comparison}
\resizebox{\textwidth}{!}{%
\begin{tabular}{l|ccc||ccc}
\toprule
\multirow{2}{*}{Method} & \multicolumn{3}{c||}{\textsc{Franka Kitchen}} & \multicolumn{3}{c}{\textsc{Metaworld}} \\
 & 5 demos & 15 demos & 25 demos & 5 demos & 15 demos & 25 demos \\
\midrule
CLIP          & 11.67 ± 0.95 & 27.47 ± 1.01 & 31.20 ± 2.62 & 42.29 ± 2.65 & 60.33 ± 1.32 & 62.54 ± 4.36 \\
R3M           & 28.60 ± 1.39 & 42.20 ± 1.00 & 51.13 ± 2.83 & 46.83 ± 3.85 & 56.50 ± 5.20 & 60.08 ± 3.62 \\
LIV           & 23.40 ± 0.78 & 42.73 ± 1.17 & 51.93 ± 0.95 & 46.95 ± 2.07 & 64.33 ± 3.63 & 66.67 ± 1.49 \\
DecisionNCE   & 25.33 ± 1.30 & 43.20 ± 2.25 & 50.87 ± 2.95 & 44.58 ± 2.79 & 59.08 ± 1.77 & 69.75 ± 3.90 \\
\rowcolor{gray!20}
AcTOL w/o BB   & 32.80 ± 1.23 & 54.20 ± 0.85 & 60.80 ± 0.87 & 50.29 ± 4.05 & 70.83 ± 4.21 & 73.33 ± 2.83 \\
\rowcolor{gray!20}
AcTOL & 
\textbf{42.60} ± 0.53 & 
\textbf{61.80} ± 2.54 & 
\textbf{64.60} ± 0.57 & 
\textbf{53.81} ± 3.89 & 
\textbf{74.13} ± 1.59 & 
\textbf{81.13} ± 1.59 \\
\rowcolor{gray!20}
 & \textcolor{n-green}{(+48.95\%)} & \textcolor{n-green}{(+43.06\%)} & \textcolor{n-green}{(+24.40\%)} & \textcolor{n-green}{(+14.61\%)} & \textcolor{n-green}{(+15.23\%)} & \textcolor{n-green}{(+16.32\%)} \\
\bottomrule
\end{tabular}\vspace{-2ex}
}
\vspace{-1em}
\end{table*}

\vspace{-2ex}
\subsection{Language-Conditioned Behavior Cloning}\vspace{-1ex} \label{sec:lcbc}
For LCBC policy learning, we keep the pre-trained vision-language encoders frozen and feed their output representations into a lightweight MLP, which is trained as a policy network.


\noindent\textbf{Simulation results.} In simulation, each task is performed from two camera viewpoints (left and right), with varying numbers of demonstrations $\left[5, 15, 25\right]$ (\textit{i.e.}, dataset size) for training, and evaluated under three different random seeds. We report the success rate across different environments and dataset sizes, averaged over camera views and seeds. Detailed comparison results for each task can be referred to Appendix~\ref{sec:app_lcbc}. Table~\ref{tab:performance_comparison} presents the comparison results, demonstrating that AcTOL achieves significantly enhanced performance relative to baseline methods across all evaluated datasets and environments. This superiority is particularly pronounced in the complex Franka Kitchen setting, especially under data constraints, where AcTOL with fewer demonstrations (\textit{e.g.}, 5/15) often matches or surpasses other methods using more data (\textit{e.g.}, 15/25), indicating its high data efficiency and robust low-resource generalization capabilities. Furthermore, ablation studies confirm the integral role of the Brownian Bridge (BB) constraint; its removal (AcTOL w/o BB) results in a significant performance decrease, validating its contribution to improving representation quality for effective policy optimization via behavior cloning.

\begin{wrapfigure}{r}{0.5\linewidth}  
  \centering
  \vspace{-2em} 
  \includegraphics[width=\linewidth]{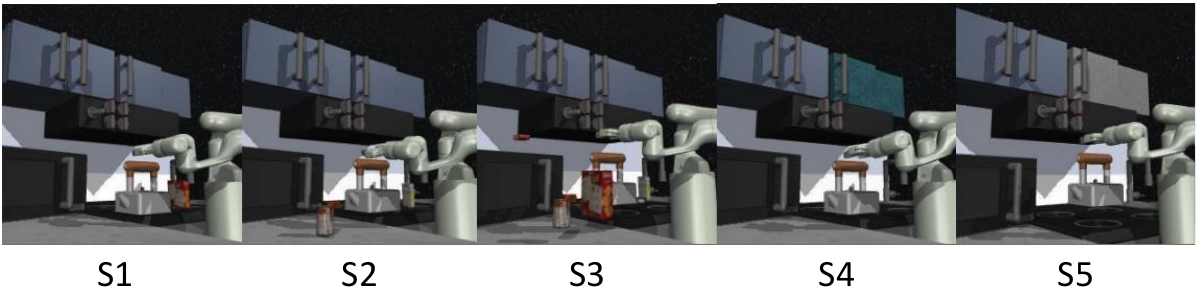}
  \caption{Visual shifts applied in Franka Kitchen.}
  \label{fig:sim_visual_shift}
  \vspace{-2.0em} 
\end{wrapfigure}

\noindent\textbf{Robustness under visual shifts.} To further assess the ability of the model to handle visual distribution shifts, we conduct experiments in the Franka Kitchen environment following the protocol of~\cite{burns2023makespretrainedvisualrepresentations}. Specifically, we compare AcTOL with the strongest baseline, DecisionNCE, under visual changes absent from training. These include: (1) object distractors of increasing difficulty: easy (S1), medium (S2), and hard (S3), corresponding to scenes with 1, 3, and 9 additional YCB objects~\cite{calli2015ycb}, respectively; and (2) background texture variations with marble hinge (S4) and metal slide (S5). All shifts are shown in Figure~\ref{fig:sim_visual_shift}.

\begin{wraptable}{r}{0.5\textwidth}
    \centering
    \vspace{-1.3em}
    \caption{Success rate comparison across different visual shifts in Franka Kitchen.}
    \label{tab:sim_visual_shift}
    \vspace{-0.5em}
    \resizebox{\linewidth}{!}{ 
    \begin{tabular}{lcccccc}
        \toprule
        Method & No shift & S1 & S2 & S3 & S4 & S5 \\
        \midrule
        DecisionNCE & 43.2 & 27.2 & 25.6 & 4.8 & 0.0 & 8.8  \\
        \rowcolor{gray!20}
        AcTOL       & \textbf{61.8} & \textbf{43.2} & \textbf{32.8} & \textbf{9.2} & \textbf{4.4} & \textbf{38.4} \\
        \bottomrule
    \end{tabular}
    }
    \vspace{-1.5em}
\end{wraptable}

Policies are trained with 15 demonstrations per task, and success rates averaged over five tasks are reported in Table~\ref{tab:sim_visual_shift}. While performance drops under visual shifts, which is expected, AcTOL continues to outperform DecisionNCE in all available test conditions. This suggests that the learned representation maintains useful generalization ability even without any specific adaptation for visual domain shift.

\begin{wrapfigure}{r}{0.5\linewidth}
    \centering
    \vspace{-1em}
    \captionof{table}{Performance comparison on Unitree D1 arm. Success rates are reported over 10 trials.}
    \label{tab:real_comparison}
    \vspace{-0.6em}
    \resizebox{\linewidth}{!}{ 
        \begin{tabular}{lccc} 
        \toprule
        Method & Pick Cup & Open [X] Drawer & Close [X] Drawer \\
        \midrule
        CLIP & 0 & 20 & 30 \\
        R3M & 10 & 40 & 40 \\
        LIV & 20 & 30 & 50 \\
        DecisionNCE & 20 & 40 & 60 \\
        \rowcolor{gray!20}
        \textbf{AcTOL} & \textbf{50} & \textbf{80} & \textbf{90} \\
        \bottomrule
        \end{tabular}
    }
    \vspace{-1em}
\end{wrapfigure}

\noindent\textbf{Real Robot results.} Table~\ref{tab:real_comparison} shows the real robot comparison results. AcTOL consistently outperforms all baseline models across the three tasks. Among them, the pick cup task yields relatively lower performance, as it requires the model to precisely identify and grasp the cup handle, demanding stronger spatial perception capabilities. For the open/close [X] drawer tasks, AcTOL is able to accurately interpret the drawer number specified in the language instruction, align it with the corresponding location in the visual observation, and execute continuous actions on the correct drawer to complete the task. These results highlight the effectiveness of AcTOL’s learned visual-language representations in real-world manipulation tasks.

\begin{figure}[t]
\begin{center}
\centerline{\includegraphics[width=0.95\linewidth]{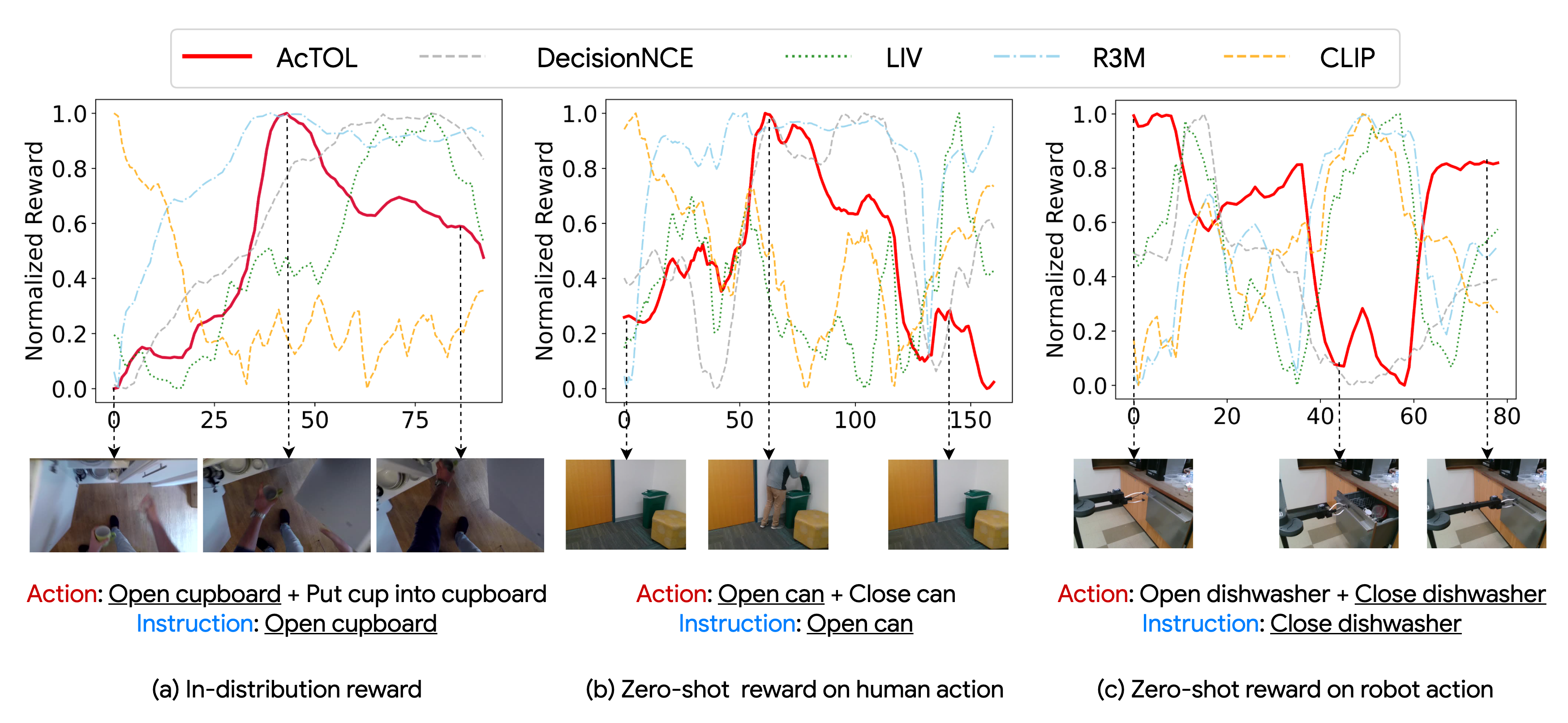}}
\caption{Visualization of the normalized learned reward corresponding to different actions. Our representations effectively help capture the correct temporal order of actions in the instruction. For more results, please refer to  Appendix~\ref{sec:app_reward}.}
\label{fig:reward}
\end{center}
\vspace{-2em}
\end{figure}
\vspace{-1ex}
\subsection{Language-Conditioned Visual Rewards}
\vspace{-1ex}

\begin{wrapfigure}{r}{0.5\linewidth}
    \centering
    \vspace{-2.8em}
    \includegraphics[width=\linewidth]{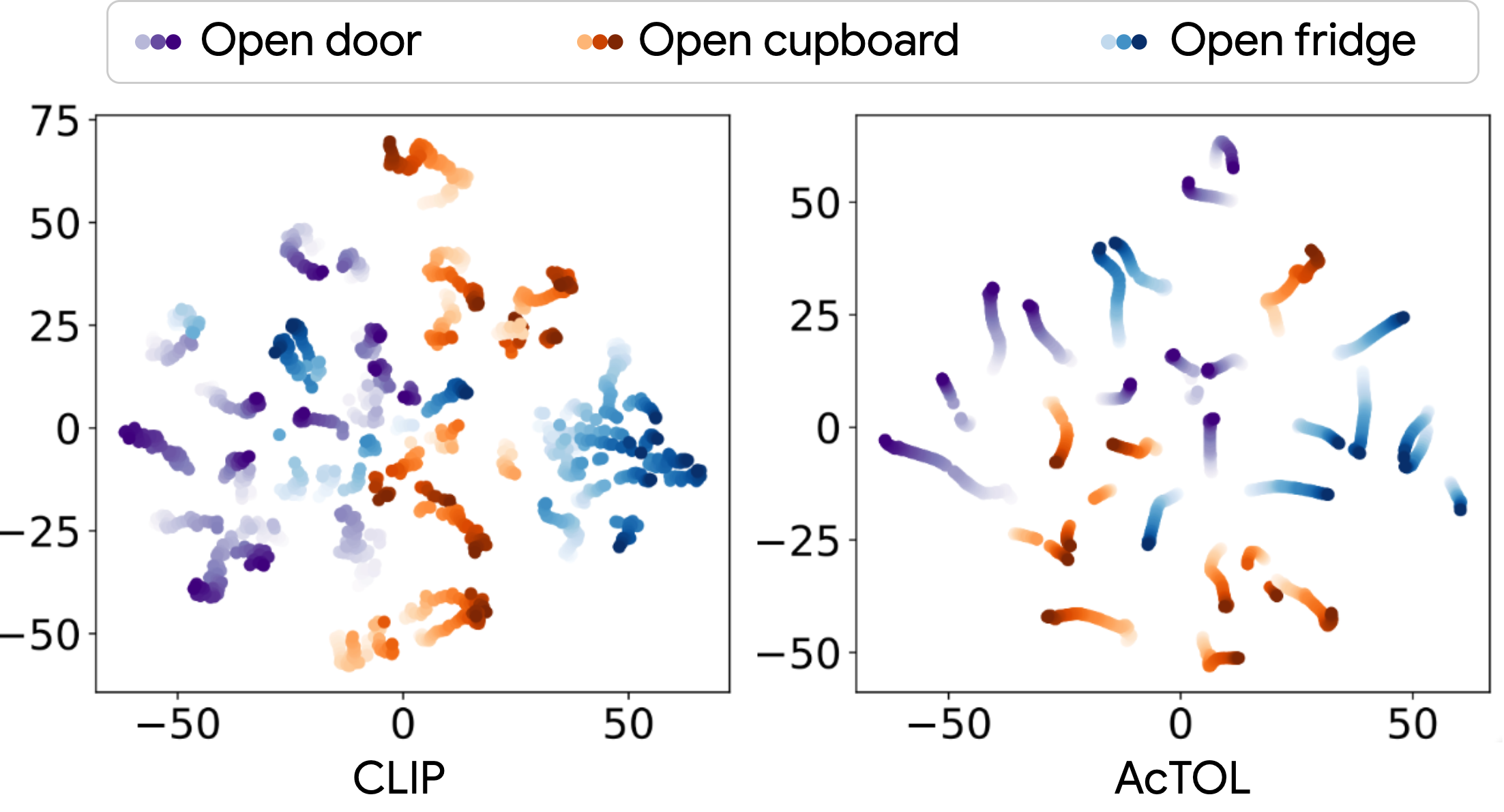}
    \caption{Visual trajectory visualization.\vspace{-2ex}}
    \label{fig:traj}
    \vspace{-0.em}
\end{wrapfigure} By learning semantically smooth visual representations, our model further enables the use of semantic trajectories as effective task rewards. To illustrate this, we first demonstrate the continuity of purely visual representations. In Figure~\ref{fig:traj}, we visualize the learned visual representation trajectories for three tasks, each with ten video clips, using t-SNE. The results show that AcTOL significantly improves the temporal continuity of video feature trajectories while retaining CLIP’s discriminative ability to distinguish between actions associated with different instructions. As discussed in Section~\ref{sec:bb}, the visual continuity can stabilize learning ordered vision-language alignment. Building on this foundation, we define a dense reward signal based on the semantic alignment between the current visual state and the language goal. Specifically, at each time step $i$, we define the reward $\operatorname{cosine}(\mathbf{v}^i, \mathbf{l})$ as the similarity between the current visual state and the language goal. While prior work~\cite{icml23-liv,icml24-decisionnce} focused primarily on single-action video clips, we evaluate reward quality on three clips, each containing two consecutive actions, to assess whether the model can reliably capture fine-grained action semantics. Figure~\ref{fig:reward} (a) presents an in-distribution evaluation using a video from EPIC-KITCHEN-100. Our model produces a clear reward peak aligned with the completion of the “open cupboard” action, followed by a decline, indicating successful temporal localization of the instructed behavior. In contrast, R3M and DecisionNCE rewards continue increasing beyond the relevant action segment. Figures~\ref{fig:reward} (b) and (c) show results on real-world videos from~\cite{rss22-robotvideo}, where human and robot actors perform opposite actions. Only our method consistently produces symmetric and instruction-aligned reward curves, accurately identifying both action boundaries and semantics.

\subsection{Robustness Study under Linguistic Perturbations} \label{sec:robustness}\vspace{-1ex}
In the EPIC-KITCHEN-100 dataset, textual annotations are often concise, such as \texttt{``open cupboard''}. In the default setting of LCBC, we employ similarly structured simple instructions. In this experiment, to validate the robustness of the representations our method learns in real-world scenarios, we introduce several modifications to the language instructions. Specifically, we transform each original instruction into four conversational variants by varying lexical choices (\textit{e.g.,} verbs and nouns) and incorporating ChatGPT-4o~\cite{openai2024gpt4o} generated complex instructions. Details can be found in Appendix \ref{sec:app_robust}. We then evaluate the imitation learning performance conditioned on these modified instructions in the Franka Kitchen environment. For comparison, we select LIV and DecisionNCE, which are also pre-trained on EPIC-KITCHEN-100. As shown in Figure~\ref{fig:lang_perturb}, the success rates of LIV and DecisionNCE dropped by 11.9\% and 2.7\% on average, respectively, while our method maintained a success rate comparable to that before language perturbation. This result demonstrates the robustness of our learned representations, which generalize more effectively to real-world scenarios.

\begin{figure}[t]
    \vspace{-2.5em}
    \centering
    \includegraphics[width=\linewidth]{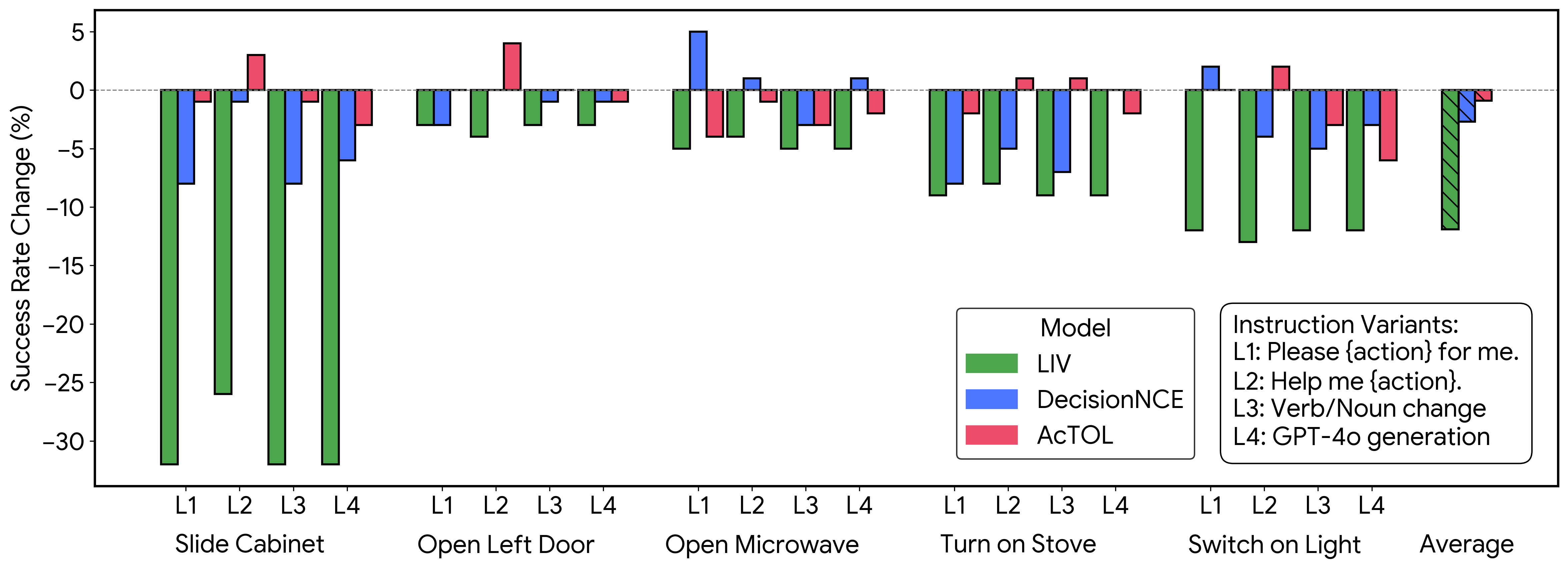}
    \caption{Success rate fluctuation across tasks in Franka Kitchen for different instruction variants.}
    \label{fig:lang_perturb}
    \vspace{-1.5em}
\end{figure}

\subsection{Mitigating the Human-to-Robot Gap via Fine-Tuning}
\begin{wraptable}{r}{0.4\textwidth}
\vspace{-1em}
    \centering
    \caption{Fine-tuning AcTOL encoders efficiently improves the success rate in Franka Kitchen.}
    \label{tab:finetune}
    \resizebox{\linewidth}{!}{ 
    \begin{tabular}{lcc}
        \toprule
        Franka Kitchen & Frozen & Finetune \\
        \midrule
        AcTOL & 61.8 & \textbf{86.4} \\
        \bottomrule
    \end{tabular}
    }
    \vspace{-1em}
\end{wraptable}

Although pretraining on human videos provides generalizable knowledge, bridging the human-to-robot domain gap remains a persistent challenge~\cite{zhou2024mitigating,li2025h2rhumantorobotdataaugmentation,nguyen2025robotic}. Since the AcTOL objectives capture the inherent temporal ordering of videos in an embodiment-agnostic manner, we can fine-tune the vision-language encoders with the same objectives used in pretraining during downstream behavior cloning. Notably, we find that fine-tuning with only a small number of robot demonstrations is sufficient to substantially mitigate the domain gap. We take 25 in-domain demonstrations (5 per task) in Franka Kitchen to fine-tune the pre-trained encoders using the AcTOL objectives. Then, as in Sec~\ref{sec:lcbc}, we freeze the fine-tuned encoders and train policy networks on top using behavior cloning. We report the comparisons when using 15 demos for LCBC. As shown in Table~\ref{tab:finetune}, the success rate improvement demonstrates that the learned temporal inductive bias can be effectively adapted to the robot domain with limited supervision.

\vspace{-0.5em}
\section{Related Work}
\vspace{-1ex}
Given the success of large-scale pre-training in the vision  and language research communities~\cite{nips20-gpt3,nips23-llava}, many studies have attempted to extend this paradigm to the field of robotics. Some work leverage massive robotic trajectory data~\cite{corr23-rtx} for pre-training, aiming to establish unified vision-langauge-action models~\cite{corl23-rt2,corr24-gr2,corr24-openvla,pi-zero,octo,DBLP:conf/nips/SzotMAHKT24,spatialvla}. However, collecting large amounts of high-quality robot trajectory data is extremely costly and time-consuming. Consequently, many studies have begun to explore the use of large-scale, readily available, out-of-domain human action video data to learn generalizable representations that can be transferred to robotic tasks~\cite{icra18-tcn,iclr23-vip,corl22-mvp,corl22-r3m,rss23-voltron,icml23-liv,nips23-vc1,DBLP:journals/corr/abs-2410-11758,rss24-mpi,icml24-decisionnce}. Among these, TCN~\cite{icra18-tcn}, VIP~\cite{iclr23-vip}, MVP~\cite{corl22-mvp}, and VC-1~\cite{nips23-vc1} focus solely on studying unimodal visual representations, limiting their performance when understanding language instructions is required. R3M~\cite{corl22-r3m} employs language and reward models to shape progressive visual representations, while Voltron~\cite{rss23-voltron} and MPI~\cite{rss24-mpi} model the transition from the current state to the goal state conditioned on language. However, during training, these approaches freeze the language encoder, using it only to aid in the training of visual representations. As a result, they do not effectively achieve multi-modal representation learning. LIV~\cite{icml23-liv} and DecisionNCE~\cite{icml24-decisionnce} have attempted to leverage CLIP~\cite{icml21-clip} to train embodied multi-modal representations. LIV treats language as the goal of video actions, aligning it with the final frame, while DecisionNCE aligns language with the transition from the initial to final frame. Both rely on a goal-reaching assumption, which can lead to suboptimal results in noisy real-world videos. In contrast, our approach avoids rigid assumptions by enforcing semantic alignment that follows the intrinsic temporal continuity of videos, leading to more robust and generalizable vision-language representations. This property also benefits methods like UVD~\cite{UVD}, which rely on pretrained visual features to detect phase changes and decompose long-horizon tasks. Our method more reliably identifies action phases, enabling stronger progress rewards and improving suitability for such goal-conditioned downstream tasks.


\section{Conclusion and Limitations} \label{sec:conclusion}
We present Action Temporal Coherence Learning (AcTOL) as a promising vision-language pre-training solution for generalizable embodied agents. By learning action consistency from a large corpus of human action videos, AcTOL theoretically ensures the ordering and continuity of vision-language representations, as well as robustness to language perturbations. Extensive experiments across various environments demonstrate that AcTOL effectively generalizes to complex robotic manipulation tasks. While the temporal ordering of actions provides a strong inductive bias for many goal-directed tasks, it may not align well with tasks that involve ambiguous, repetitive, or cyclic behaviors. In such cases, the assumption of coherent progression might break down, potentially affecting the reliability of the model. Future work could explore adapting AcTOL to handle such repetitive action sequences. 

\ack{This work was partially supported by ARC DE240100105, DP240101814, DP230101196, DP230101753, BA24006, DE250100363, and ARC Industrial Transformation Research Hubs IH230100013.}
\bibliographystyle{plain}
\bibliography{Reference}
\newpage
\section*{NeurIPS Paper Checklist}

\begin{enumerate}

\item {\bf Claims}
    \item[] Question: Do the main claims made in the abstract and introduction accurately reflect the paper's contributions and scope?
    \item[] Answer: \answerYes{} 
    \item[] Justification: The abstract and introduction consistently articulate the problem, the proposed AcTOL solution, its key contributions regarding ordered and continuous representation learning, and its demonstrated benefits and scope in vision-language pre-training for embodied agents.
    \item[] Guidelines:
    \begin{itemize}
        \item The answer NA means that the abstract and introduction do not include the claims made in the paper.
        \item The abstract and/or introduction should clearly state the claims made, including the contributions made in the paper and important assumptions and limitations. A No or NA answer to this question will not be perceived well by the reviewers. 
        \item The claims made should match theoretical and experimental results, and reflect how much the results can be expected to generalize to other settings. 
        \item It is fine to include aspirational goals as motivation as long as it is clear that these goals are not attained by the paper. 
    \end{itemize}

\item {\bf Limitations}
    \item[] Question: Does the paper discuss the limitations of the work performed by the authors?
    \item[] Answer: \answerYes{} 
    \item[] Justification: Limitations have been discussed in Section~\ref{sec:conclusion} and Appendix~\ref{app:impact and limit}.
    \item[] Guidelines:
    \begin{itemize}
        \item The answer NA means that the paper has no limitation while the answer No means that the paper has limitations, but those are not discussed in the paper. 
        \item The authors are encouraged to create a separate "Limitations" section in their paper.
        \item The paper should point out any strong assumptions and how robust the results are to violations of these assumptions (e.g., independence assumptions, noiseless settings, model well-specification, asymptotic approximations only holding locally). The authors should reflect on how these assumptions might be violated in practice and what the implications would be.
        \item The authors should reflect on the scope of the claims made, e.g., if the approach was only tested on a few datasets or with a few runs. In general, empirical results often depend on implicit assumptions, which should be articulated.
        \item The authors should reflect on the factors that influence the performance of the approach. For example, a facial recognition algorithm may perform poorly when image resolution is low or images are taken in low lighting. Or a speech-to-text system might not be used reliably to provide closed captions for online lectures because it fails to handle technical jargon.
        \item The authors should discuss the computational efficiency of the proposed algorithms and how they scale with dataset size.
        \item If applicable, the authors should discuss possible limitations of their approach to address problems of privacy and fairness.
        \item While the authors might fear that complete honesty about limitations might be used by reviewers as grounds for rejection, a worse outcome might be that reviewers discover limitations that aren't acknowledged in the paper. The authors should use their best judgment and recognize that individual actions in favor of transparency play an important role in developing norms that preserve the integrity of the community. Reviewers will be specifically instructed to not penalize honesty concerning limitations.
    \end{itemize}

\item {\bf Theory assumptions and proofs}
    \item[] Question: For each theoretical result, does the paper provide the full set of assumptions and a complete (and correct) proof?
    \item[] Answer: \answerYes{} 
    \item[] Justification: Full set of assumptions and a complete (and correct) proof is provided in Appendix~\ref{app:proof}.
    \item[] Guidelines:
    \begin{itemize}
        \item The answer NA means that the paper does not include theoretical results. 
        \item All the theorems, formulas, and proofs in the paper should be numbered and cross-referenced.
        \item All assumptions should be clearly stated or referenced in the statement of any theorems.
        \item The proofs can either appear in the main paper or the supplemental material, but if they appear in the supplemental material, the authors are encouraged to provide a short proof sketch to provide intuition. 
        \item Inversely, any informal proof provided in the core of the paper should be complemented by formal proofs provided in appendix or supplemental material.
        \item Theorems and Lemmas that the proof relies upon should be properly referenced. 
    \end{itemize}

    \item {\bf Experimental result reproducibility}
    \item[] Question: Does the paper fully disclose all the information needed to reproduce the main experimental results of the paper to the extent that it affects the main claims and/or conclusions of the paper (regardless of whether the code and data are provided or not)?
    \item[] Answer: \answerYes{} 
    \item[] Justification: Implementation details have been provided in Appendix~\ref{app:eval_detail}.
    \item[] Guidelines:
    \begin{itemize}
        \item The answer NA means that the paper does not include experiments.
        \item If the paper includes experiments, a No answer to this question will not be perceived well by the reviewers: Making the paper reproducible is important, regardless of whether the code and data are provided or not.
        \item If the contribution is a dataset and/or model, the authors should describe the steps taken to make their results reproducible or verifiable. 
        \item Depending on the contribution, reproducibility can be accomplished in various ways. For example, if the contribution is a novel architecture, describing the architecture fully might suffice, or if the contribution is a specific model and empirical evaluation, it may be necessary to either make it possible for others to replicate the model with the same dataset, or provide access to the model. In general. releasing code and data is often one good way to accomplish this, but reproducibility can also be provided via detailed instructions for how to replicate the results, access to a hosted model (e.g., in the case of a large language model), releasing of a model checkpoint, or other means that are appropriate to the research performed.
        \item While NeurIPS does not require releasing code, the conference does require all submissions to provide some reasonable avenue for reproducibility, which may depend on the nature of the contribution. For example
        \begin{enumerate}
            \item If the contribution is primarily a new algorithm, the paper should make it clear how to reproduce that algorithm.
            \item If the contribution is primarily a new model architecture, the paper should describe the architecture clearly and fully.
            \item If the contribution is a new model (e.g., a large language model), then there should either be a way to access this model for reproducing the results or a way to reproduce the model (e.g., with an open-source dataset or instructions for how to construct the dataset).
            \item We recognize that reproducibility may be tricky in some cases, in which case authors are welcome to describe the particular way they provide for reproducibility. In the case of closed-source models, it may be that access to the model is limited in some way (e.g., to registered users), but it should be possible for other researchers to have some path to reproducing or verifying the results.
        \end{enumerate}
    \end{itemize}

\item {\bf Open access to data and code}
    \item[] Question: Does the paper provide open access to the data and code, with sufficient instructions to faithfully reproduce the main experimental results, as described in supplemental material?
    \item[] Answer: \answerYes{} 
    \item[] Justification: The dataset used for pre-training is EPIC-KITCHENS-100 which is publicly available, and the code is included in supplemental materials.
    \item[] Guidelines:
    \begin{itemize}
        \item The answer NA means that paper does not include experiments requiring code.
        \item Please see the NeurIPS code and data submission guidelines (\url{https://nips.cc/public/guides/CodeSubmissionPolicy}) for more details.
        \item While we encourage the release of code and data, we understand that this might not be possible, so “No” is an acceptable answer. Papers cannot be rejected simply for not including code, unless this is central to the contribution (e.g., for a new open-source benchmark).
        \item The instructions should contain the exact command and environment needed to run to reproduce the results. See the NeurIPS code and data submission guidelines (\url{https://nips.cc/public/guides/CodeSubmissionPolicy}) for more details.
        \item The authors should provide instructions on data access and preparation, including how to access the raw data, preprocessed data, intermediate data, and generated data, etc.
        \item The authors should provide scripts to reproduce all experimental results for the new proposed method and baselines. If only a subset of experiments are reproducible, they should state which ones are omitted from the script and why.
        \item At submission time, to preserve anonymity, the authors should release anonymized versions (if applicable).
        \item Providing as much information as possible in supplemental material (appended to the paper) is recommended, but including URLs to data and code is permitted.
    \end{itemize}

\item {\bf Experimental setting/details}
    \item[] Question: Does the paper specify all the training and test details (e.g., data splits, hyperparameters, how they were chosen, type of optimizer, etc.) necessary to understand the results?
    \item[] Answer: \answerYes{} 
    \item[] Justification: Experimental setting/details are discussed in Section~\ref{sec:exp} and Appendix~\ref{app:eval_detail}.
    \item[] Guidelines:
    \begin{itemize}
        \item The answer NA means that the paper does not include experiments.
        \item The experimental setting should be presented in the core of the paper to a level of detail that is necessary to appreciate the results and make sense of them.
        \item The full details can be provided either with the code, in appendix, or as supplemental material.
    \end{itemize}

\item {\bf Experiment statistical significance}
    \item[] Question: Does the paper report error bars suitably and correctly defined or other appropriate information about the statistical significance of the experiments?
    \item[] Answer: \answerYes{} 
    \item[] Justification: Errors are provided for all language-conditioned behavior cloning experiments.
    \item[] Guidelines:
    \begin{itemize}
        \item The answer NA means that the paper does not include experiments.
        \item The authors should answer "Yes" if the results are accompanied by error bars, confidence intervals, or statistical significance tests, at least for the experiments that support the main claims of the paper.
        \item The factors of variability that the error bars are capturing should be clearly stated (for example, train/test split, initialization, random drawing of some parameter, or overall run with given experimental conditions).
        \item The method for calculating the error bars should be explained (closed form formula, call to a library function, bootstrap, etc.)
        \item The assumptions made should be given (e.g., Normally distributed errors).
        \item It should be clear whether the error bar is the standard deviation or the standard error of the mean.
        \item It is OK to report 1-sigma error bars, but one should state it. The authors should preferably report a 2-sigma error bar than state that they have a 96\% CI, if the hypothesis of Normality of errors is not verified.
        \item For asymmetric distributions, the authors should be careful not to show in tables or figures symmetric error bars that would yield results that are out of range (e.g. negative error rates).
        \item If error bars are reported in tables or plots, The authors should explain in the text how they were calculated and reference the corresponding figures or tables in the text.
    \end{itemize}

\item {\bf Experiments compute resources}
    \item[] Question: For each experiment, does the paper provide sufficient information on the computer resources (type of compute workers, memory, time of execution) needed to reproduce the experiments?
    \item[] Answer: \answerYes{} 
    \item[] Justification: Experiments compute resources are discussed in Appendix~\ref{app:eval_detail}.
    \item[] Guidelines:
    \begin{itemize}
        \item The answer NA means that the paper does not include experiments.
        \item The paper should indicate the type of compute workers CPU or GPU, internal cluster, or cloud provider, including relevant memory and storage.
        \item The paper should provide the amount of compute required for each of the individual experimental runs as well as estimate the total compute. 
        \item The paper should disclose whether the full research project required more compute than the experiments reported in the paper (e.g., preliminary or failed experiments that didn't make it into the paper). 
    \end{itemize}
    
\item {\bf Code of ethics}
    \item[] Question: Does the research conducted in the paper conform, in every respect, with the NeurIPS Code of Ethics \url{https://neurips.cc/public/EthicsGuidelines}?
    \item[] Answer: \answerYes{} 
    \item[] Justification: Authors have read NeurIPS Code of Ethics and confirm to preserve anonymity.
    \item[] Guidelines:
    \begin{itemize}
        \item The answer NA means that the authors have not reviewed the NeurIPS Code of Ethics.
        \item If the authors answer No, they should explain the special circumstances that require a deviation from the Code of Ethics.
        \item The authors should make sure to preserve anonymity (e.g., if there is a special consideration due to laws or regulations in their jurisdiction).
    \end{itemize}

\item {\bf Broader impacts}
    \item[] Question: Does the paper discuss both potential positive societal impacts and negative societal impacts of the work performed?
    \item[] Answer: \answerYes{} 
    \item[] Justification: Broader impacts are discussed in Appendix~\ref{app:impact and limit}.
    \item[] Guidelines:
    \begin{itemize}
        \item The answer NA means that there is no societal impact of the work performed.
        \item If the authors answer NA or No, they should explain why their work has no societal impact or why the paper does not address societal impact.
        \item Examples of negative societal impacts include potential malicious or unintended uses (e.g., disinformation, generating fake profiles, surveillance), fairness considerations (e.g., deployment of technologies that could make decisions that unfairly impact specific groups), privacy considerations, and security considerations.
        \item The conference expects that many papers will be foundational research and not tied to particular applications, let alone deployments. However, if there is a direct path to any negative applications, the authors should point it out. For example, it is legitimate to point out that an improvement in the quality of generative models could be used to generate deepfakes for disinformation. On the other hand, it is not needed to point out that a generic algorithm for optimizing neural networks could enable people to train models that generate Deepfakes faster.
        \item The authors should consider possible harms that could arise when the technology is being used as intended and functioning correctly, harms that could arise when the technology is being used as intended but gives incorrect results, and harms following from (intentional or unintentional) misuse of the technology.
        \item If there are negative societal impacts, the authors could also discuss possible mitigation strategies (e.g., gated release of models, providing defenses in addition to attacks, mechanisms for monitoring misuse, mechanisms to monitor how a system learns from feedback over time, improving the efficiency and accessibility of ML).
    \end{itemize}
    
\item {\bf Safeguards}
    \item[] Question: Does the paper describe safeguards that have been put in place for responsible release of data or models that have a high risk for misuse (e.g., pretrained language models, image generators, or scraped datasets)?
    \item[] Answer: \answerNA{} 
    \item[] Justification: This paper poses no such risks.
    \item[] Guidelines:
    \begin{itemize}
        \item The answer NA means that the paper poses no such risks.
        \item Released models that have a high risk for misuse or dual-use should be released with necessary safeguards to allow for controlled use of the model, for example by requiring that users adhere to usage guidelines or restrictions to access the model or implementing safety filters. 
        \item Datasets that have been scraped from the Internet could pose safety risks. The authors should describe how they avoided releasing unsafe images.
        \item We recognize that providing effective safeguards is challenging, and many papers do not require this, but we encourage authors to take this into account and make a best faith effort.
    \end{itemize}

\item {\bf Licenses for existing assets}
    \item[] Question: Are the creators or original owners of assets (e.g., code, data, models), used in the paper, properly credited and are the license and terms of use explicitly mentioned and properly respected?
    \item[] Answer: \answerYes{} 
    \item[] Justification: The EPIC-KITCHENS-100 dataset used in this paper has been  explicitly mentioned and properly respected.
    \item[] Guidelines:
    \begin{itemize}
        \item The answer NA means that the paper does not use existing assets.
        \item The authors should cite the original paper that produced the code package or dataset.
        \item The authors should state which version of the asset is used and, if possible, include a URL.
        \item The name of the license (e.g., CC-BY 4.0) should be included for each asset.
        \item For scraped data from a particular source (e.g., website), the copyright and terms of service of that source should be provided.
        \item If assets are released, the license, copyright information, and terms of use in the package should be provided. For popular datasets, \url{paperswithcode.com/datasets} has curated licenses for some datasets. Their licensing guide can help determine the license of a dataset.
        \item For existing datasets that are re-packaged, both the original license and the license of the derived asset (if it has changed) should be provided.
        \item If this information is not available online, the authors are encouraged to reach out to the asset's creators.
    \end{itemize}

\item {\bf New assets}
    \item[] Question: Are new assets introduced in the paper well documented and is the documentation provided alongside the assets?
    \item[] Answer: \answerNA{} 
    \item[] Justification: The paper does not release new assets.
    \item[] Guidelines:
    \begin{itemize}
        \item The answer NA means that the paper does not release new assets.
        \item Researchers should communicate the details of the dataset/code/model as part of their submissions via structured templates. This includes details about training, license, limitations, etc. 
        \item The paper should discuss whether and how consent was obtained from people whose asset is used.
        \item At submission time, remember to anonymize your assets (if applicable). You can either create an anonymized URL or include an anonymized zip file.
    \end{itemize}

\item {\bf Crowdsourcing and research with human subjects}
    \item[] Question: For crowdsourcing experiments and research with human subjects, does the paper include the full text of instructions given to participants and screenshots, if applicable, as well as details about compensation (if any)? 
    \item[] Answer: \answerNA{} 
    \item[] Justification: This paper does not involve crowdsourcing nor research with human subjects.
    \item[] Guidelines:
    \begin{itemize}
        \item The answer NA means that the paper does not involve crowdsourcing nor research with human subjects.
        \item Including this information in the supplemental material is fine, but if the main contribution of the paper involves human subjects, then as much detail as possible should be included in the main paper. 
        \item According to the NeurIPS Code of Ethics, workers involved in data collection, curation, or other labor should be paid at least the minimum wage in the country of the data collector. 
    \end{itemize}

\item {\bf Institutional review board (IRB) approvals or equivalent for research with human subjects}
    \item[] Question: Does the paper describe potential risks incurred by study participants, whether such risks were disclosed to the subjects, and whether Institutional Review Board (IRB) approvals (or an equivalent approval/review based on the requirements of your country or institution) were obtained?
    \item[] Answer: \answerNA{} 
    \item[] Justification:  This paper does not involve crowdsourcing nor research with human subjects.
    \item[] Guidelines:
    \begin{itemize}
        \item The answer NA means that the paper does not involve crowdsourcing nor research with human subjects.
        \item Depending on the country in which research is conducted, IRB approval (or equivalent) may be required for any human subjects research. If you obtained IRB approval, you should clearly state this in the paper. 
        \item We recognize that the procedures for this may vary significantly between institutions and locations, and we expect authors to adhere to the NeurIPS Code of Ethics and the guidelines for their institution. 
        \item For initial submissions, do not include any information that would break anonymity (if applicable), such as the institution conducting the review.
    \end{itemize}

\item {\bf Declaration of LLM usage}
    \item[] Question: Does the paper describe the usage of LLMs if it is an important, original, or non-standard component of the core methods in this research? Note that if the LLM is used only for writing, editing, or formatting purposes and does not impact the core methodology, scientific rigorousness, or originality of the research, declaration is not required.
    \item[] Answer: \answerNA{} 
    \item[] Justification: The core method development in this research does not involve LLMs.
    \item[] Guidelines:
    \begin{itemize}
        \item The answer NA means that the core method development in this research does not involve LLMs as any important, original, or non-standard components.
        \item Please refer to our LLM policy (\url{https://neurips.cc/Conferences/2025/LLM}) for what should or should not be described.
    \end{itemize}

\end{enumerate}
\newpage
\appendix
\addcontentsline{toc}{section}{Appendix} 
\part*{Appendix} 
\parttoc 




\section{Pre-training Details}\label{sec:pretrain_details}
Following \cite{icml23-liv,icml24-decisionnce}, we use a modified ResNet-50~\cite{cvpr16-resnet} from CLIP~\cite{icml21-clip} for the vision encoder and a CLIP transformer for the language encoder. We initialize our model with CLIP and train them on EPIC-KITCHEN-100~\cite{corr18-epickitchen,corr20-epickitchen}. The training hyperparameters used during the pre-training are listed in Table~\ref{fig:hyperparam_table}. The training was conducted on two NVIDIA A800 GPUs taking approximately 30 hours. For hyperparameter sensitivity, we report the model performance under varying numbers of sampled frames and different values of the loss weight $\lambda$. As shown in Figure~\ref{fig:hyperparam_fig}, increasing the number of sampled frames leads to higher success rates, likely because it better preserves the temporal ordering and continuity in the video sequence. The model shows low sensitivity to $\lambda$, as we observe that $\mathcal{L}_{BB}$ converges much faster than $\mathcal{L}_{VLO}$ due to its unimodal nature. As a result, $\mathcal{L}_{BB}$ primarily serves as a constraint during training rather than a dominant optimization objective.

\begin{figure}[h]
\begin{minipage}[h]{0.48\textwidth}
    \centering
    \small
    \captionof{table}{Hyper-parameters for pre-training.}
    \label{fig:hyperparam_table}
    \begin{tabular}{@{}ll@{}}
        \toprule
        \textbf{Config} & \textbf{Value} \\
        \midrule
        Training epochs & 1000 \\
        Optimizer & Adam \\
        Learning rate & $1 \times 10^{-5}$ \\
        Batch size & 128 \\
        Frames per video & 10 \\
        Loss weight $\lambda$ & 0.1 \\
        Weight decay & 0.001 \\
        Momentum ($\beta_1$, $\beta_2$) & 0.9, 0.999 \\
        Augmentation & RandomCropResize \\
        \bottomrule
    \end{tabular}
    
\end{minipage}%
\hfill
\begin{minipage}[h]{0.5\textwidth}
    \centering
    \includegraphics[width=\linewidth]{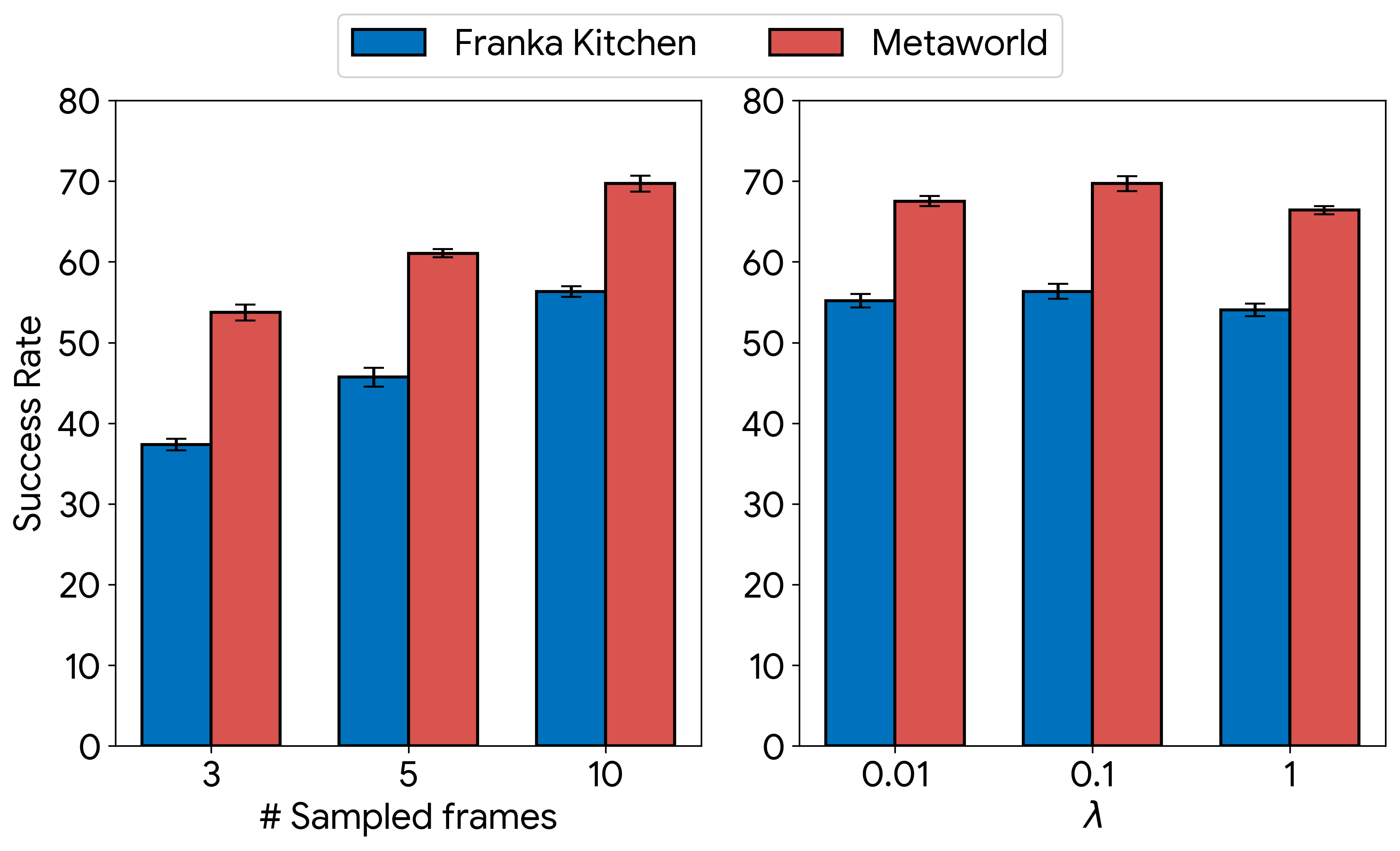}
    \captionof{figure}{Hyper-parameters sensitivity.}
    \label{fig:hyperparam_fig}
\end{minipage}
\end{figure}
    

\vspace{-4ex}
\section{Evaluation Details} \label{app:eval_detail}
\subsection{Simulation Environment} \label{sec:app_env}
We follow~\cite{corl22-r3m} for the specific simulation environment setup and code details.
\paragraph{Franka Kitchen.} The Franka Kitchen environment~\cite{corl19-franka,corr20-franka} is based on the 9 degrees of freedom Franka robot. The Franka robot is placed in a kitchen environment containing several common household items: a microwave, a kettle, an overhead light, cabinets, and an oven. Following~\cite{corl22-r3m}, the Franka Kitchen environments used in this paper are modified from their original design. Specifically, we introduce additional randomization to the scene by randomly altering the kitchen's position between episodes. This modification makes the tasks significantly more challenging in terms of both perception and control.
\paragraph{Metaworld.} The Metaworld environment~\cite{corl19-metaworld} is an open-source simulated benchmark for robot learning. In our settings, the target object position is randomized between episodes in all tasks. 

We present the specific default language instructions for each tasks in Table~\ref{tab:environment_language}.

\begin{table}[h]
    \centering
    \caption{Language Instructions for tasks in Franka Kitchen and Metaworld.}
    \label{tab:environment_language}
    \begin{tabular}{lc}
        \toprule
        Environment ID & Language Instruction     \\ \hline
        kitchen\_micro\_open-v3  & open microwave         \\ 
        kitchen\_sdoor\_open-v3  & slide cabinet    \\ 
        kitchen\_ldoor\_open-v3  & open left door         \\
        kitchen\_knob1\_on-v3    & turn on stove          \\ 
        kitchen\_light\_on-v3    & switch on light        \\ 
        \midrule
        hammer-v2-goal-observable & hammer nail \\
        button-press-topdown-v2-goal-observable & press button \\
        bin-picking-v2-goal-observable & pick and place the block between bins \\
        assembly-v2-goal-observable & assemble the ring onto peg \\
        drawer-open-v2-goal-observable & open drawer \\
        \bottomrule
    \end{tabular}
    
\end{table}

\begin{figure}
\vspace{-1em}
    \centering
    \includegraphics[width=\linewidth]{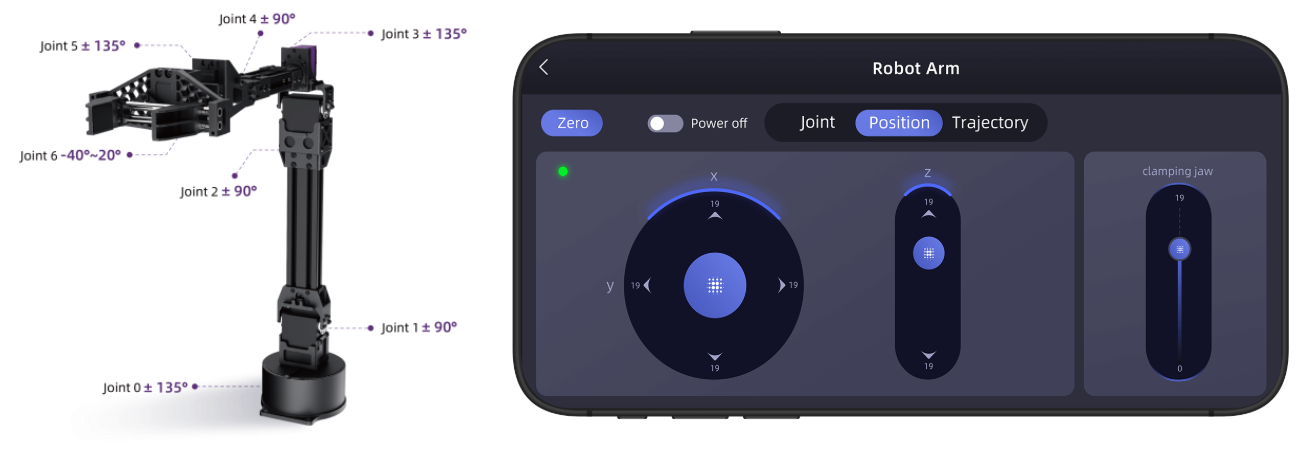}
    \caption{Action space of Unitree D1 arm and the remote control interface on Unitree Go app.}
    \label{fig:interface}
\vspace{-1em}
\end{figure}

\subsection{Real Robot Environment} \label{sec:app_realenv}
Our real robot environment is a real-world office scene where the Unitree D1 robot arm can interact with a cup and a drawer. The pick cup task requires the robot to accurately identify the handle of the cup, while the open/close [X] drawer task requires the robot to understand the drawer index specified in the language instruction and align it with the visual observation. As shown in Figure~\ref{fig:interface}, we use the Unitree Go app interface to remotely control the robotic arm for action data collection. Visual observations are collected using a third-person perspective web camera in a same frequency (20Hz) with action. During control, the whole system, including AcTOL and the policy MLP, runs on a GeForce GTX 880M GPU.

\vspace{-1ex}

\subsection{Language-Conditioned Behavior Cloning Hyperparameters}
\label{sec:app_lcbcparam}
We present the LCBC imitation learning hyperparameters in Table~\ref{tab:bcparam}. For each distinct task in simulation, we run an evaluation episode every 1,000 gradient steps by running 50 roll-outs and computing their average success rate. Over a total of 10,000 gradient steps, we conduct this evaluation 10 times. The highest success rate among these 10 evaluations is reported as the final result. To ensure robustness, we average the results across two different camera viewpoints and three independent random seeds. In total, we run: $9$ (tasks) $* 2$ (views)$* 3$ (demosizes) $* 3$ (seeds) $* 6$ (models)=$972$ (episodes), each episode takes approximately 2 hours on our workstation with a 24-core CPU, resulting in a total of roughly $1,944$ hours for the simulated LCBC experiments. For each task on the real robot, we use the final checkpoint and perform 10 evaluation runs with a fixed random seed, due to the cost of real-world policy evaluation.

\begin{table*}[h]
    \centering
    \caption{Hyper-parameters for LCBC.}
    \label{tab:bcparam}
    \begin{tabular}{lccc}
        \toprule
         & Franka Kitchen & Metaworld & Real robot\\
        \midrule
        MLP achitecture & [256,256] & [256,256] & [256,256] \\
        Non-linear activation & ReLU & ReLU & ReLU\\
        Optimizer & Adam & Adam & Adam\\
        Gradient Steps & 10K & 10K & 50K \\
        Learning rate & $1 \times 10^{-3}$ & $1 \times 10^{-3}$ & $1 \times 10^{-3}$\\
        Batch size & 32 & 32 & 32\\
        Horizon & 50 & 100 & 100 \\
        Proprioception & 9 & 4 & No\\
        \bottomrule
    \end{tabular}
    
\end{table*}

\vspace{-1ex}
\subsection{Linguistic Perturbation Results} \label{sec:app_robust}
To assess the robustness of AcTOL under language perturbations, we perform extensive experiments across four instruction variants. Instructions $1$ and $2$ transform the original action into more conversational forms. Instruction $3$ introduces vocabulary diversity by varying the verbs and nouns used. Instruction $4$ further extends Instruction $3$ by incorporating linguistically complex expressions generated using ChatGPT-4o. We present the comparison results obtained from experiments in the Franka Kitchen environment, with a data size of $5$. As shown in Table \ref{tab:ling}, AcTOL outperforms the baselines in most instruction perturbation scenarios, thereby validating its robustness.

\begin{table}[!t]
\caption{Success rate fluctuation across tasks in Franka Kitchen for different instruction variants.}
\label{tab:ling}
\centering
\small
\begin{tabularx}{\linewidth}{lXrrr}
\toprule
Task & Instruction & LIV & DecisionNCE & AcTOL \\
\midrule
\multirow{5}{*}{Slide Cabinet} 
    & 1. Please slide cabinet for me. & $-32$ & $-8$ & \cellcolor{lightgray!50}$\mathbf{-1}$ \\
    & 2. Help me slide cabinet. & $-26$ & $-1$ & \cellcolor{lightgray!50}$\mathbf{3}$ \\
    & 3. Push open the right cupboard door. & $-32$ & $-8$ & \cellcolor{lightgray!50}$\mathbf{-1}$ \\
    & 4. Mind pushing open the right cupboard cabinet door? I need to grab the cups inside. & $-32$ & $-6$ & \cellcolor{lightgray!50}$\mathbf{-3}$ \\
    & \textbf{Average} & $-30.5\pm2.6$ & $-5.8\pm2.9$ & \cellcolor{lightgray!50}$\mathbf{-0.5}\pm2.2$ \\
\midrule

\multirow{5}{*}{Open Left Door}
    & 1. Please open left door for me. & $-3$ & $-3$ & \cellcolor{lightgray!50}$\mathbf{0}$ \\
    & 2. Help me open left door. & $-4$ & $0$ & \cellcolor{lightgray!50}$\mathbf{4}$ \\
    & 3. Pull open the left cabinet door. & $-3$ & $-1$ & \cellcolor{lightgray!50}$\mathbf{0}$ \\
    & 4. Can you pull open the left cabinet door? I need to grab something inside. & $-3$ & $\mathbf{-1}$ & \cellcolor{lightgray!50}$\mathbf{-1}$ \\
    & \textbf{Average} & $-3.3\pm0.4$ & $-1.3\pm1.1$ & \cellcolor{lightgray!50}$\mathbf{0.8}\pm1.9$ \\
\midrule

\multirow{5}{*}{Open Microwave}
    & 1. Please open microwave for me. & $-5$ & $\mathbf{5}$ & \cellcolor{lightgray!50}$-4$ \\
    & 2. Help me open microwave. & $-4$ & $\mathbf{1}$ & \cellcolor{lightgray!50}$-1$ \\
    & 3. Pop open the microwave oven door. & $-5$ & $\mathbf{-3}$ & \cellcolor{lightgray!50}$\mathbf{-3}$ \\
    & 4. Would you mind helping me pop open the microwave oven door so I can heat up my lunch? & $-5$ & $\mathbf{1}$ & \cellcolor{lightgray!50}$-2$ \\
    & \textbf{Average} & $-4.8\pm0.4$ & $\mathbf{1.0}\pm2.8$ & \cellcolor{lightgray!50}$-2.5\pm1.1$ \\
\midrule

\multirow{5}{*}{Turn on Stove}
    & 1. Please turn on stove for me. & $-9$ & $-8$ & \cellcolor{lightgray!50}$\mathbf{-2}$ \\
    & 2. Help me turn on stove. & $-8$ & $-5$ & \cellcolor{lightgray!50}$\mathbf{1}$ \\
    & 3. Rotate the control knob to activate the stove. & $-9$ & $-7$ & \cellcolor{lightgray!50}$\mathbf{1}$ \\
    & 4. Let us rotate the control knob to activate the stove for cooking dinner. & $-9$ & $\mathbf{0}$ & \cellcolor{lightgray!50}$-2$ \\
    & \textbf{Average} & $-8.8\pm0.4$ & $-5.0\pm3.1$ & \cellcolor{lightgray!50}$\mathbf{-0.5}\pm1.5$ \\
\midrule

\multirow{5}{*}{Switch on Light}
    & 1. Please switch on light for me. & $-12$ & $\mathbf{2}$ & \cellcolor{lightgray!50}$0$ \\
    & 2. Help me switch on light. & $-13$ & $-4$ & \cellcolor{lightgray!50}$\mathbf{2}$ \\
    & 3. Flip the light switch. & $-12$ & $-5$ & \cellcolor{lightgray!50}$\mathbf{-3}$ \\
    & 4. Could you reach over and flip the light switch to brighten the kitchen area? & $-12$ & $\mathbf{-3}$ & \cellcolor{lightgray!50}$-6$ \\
    & \textbf{Average} & $-12.3\pm0.4$ & $-2.5\pm2.7$ & \cellcolor{lightgray!50}$\mathbf{-1.8}\pm3.0$ \\
\midrule    

\textbf{Average} & & $-11.9\pm0.5$ & $-2.7\pm1.2$ & \cellcolor{lightgray!50}$\mathbf{-0.9}\pm1.7$ \\
\bottomrule
\end{tabularx}

\end{table}

\subsection{Language-Conditioned Behavior Cloning Results} \label{sec:app_lcbc}
In Table~\ref{tab:franka_5}-~\ref{tab:metaworld_25}, we report detailed Language-Conditioned Behavior Cloning results for different task and dataset size. The results demonstrate that our method achieves significant improvements across different simulation environments, varying dataset sizes, and diverse robotic manipulation tasks.

\begin{table}[!t]
    \centering
    \caption{LCBC results when dataset size$=5$ on Franka Kitchen.}
    \label{tab:franka_5}
    \resizebox{\linewidth}{!}{ 
    \begin{tabular}{lcccccc}
        \toprule
        Method & Slide Cabinet & Open Left Door & Open Microwave & Turn On Stove & Switch On Light & Average \\
        \midrule
        CLIP & $38.7\pm5.1$ & $2.0\pm1.0$ & $3.0\pm0.0$ & $7.0\pm2.6$ & $7.7\pm1.5$ & $11.7\pm0.9$ \\
        R3M & $68.7  \pm 0.6 $ & $18.3 \pm 4.0$ & $7.7 \pm 3.2$ & $19.3 \pm 7.6$ & $29.0 \pm 6.1$ & $28.6 \pm 1.4$ \\
        LIV & $55.0 \pm 1.0$ & $6.0 \pm 2.9$ & $7.0 \pm 0.6$ & $13.0 \pm 0.6$ & $22.0 \pm 2.6$ & $20.6 \pm 0.7$ \\
        DecisionNCE & $59.3 \pm 6.8$ & $9.7 \pm 1.5$ & $7.0 \pm 2.0$ & $\mathbf{26.3} \pm 4.5$ & $24.3 \pm 2.5$ & $25.3 \pm 1.3$ \\
        \rowcolor{lightgray!50}
        AcTOL w/o BB & $71.5 \pm 3.5$ & $11.5 \pm 0.7$ & $10.5 \pm 0.7$ & $23.5 \pm 6.4$ & $47.0 \pm4.2$ & $32.8 \pm 2.8$ \\
        \rowcolor{lightgray!50}
        AcTOL & \cellcolor{lightgray!50}$\mathbf{85.5} \pm 0.7$ & \cellcolor{lightgray!50}$\mathbf{20.0} \pm 2.1$ & \cellcolor{lightgray!50}$\mathbf{18.3} \pm 4.9$ & $24.7 \pm 4.9$ & \cellcolor{lightgray!50}$\mathbf{62.3} \pm 2.8$ & \cellcolor{lightgray!50}$\mathbf{42.6} \pm 0.3$ \\
        \bottomrule
    \end{tabular}
    }
    
\end{table}

\begin{table}[!t]
    \centering
    \caption{LCBC results when dataset size$=15$ on Franka Kitchen.}
    \label{tab:franka_15}
    \resizebox{\linewidth}{!}{ 
    \begin{tabular}{lcccccc}
        \toprule
        Method & Slide Cabinet & Open Left Door & Open Microwave & Turn On Stove & Switch On Light & Average \\
        \hline
        CLIP & $71.0\pm3.6$ & $8.0\pm2.0$ & $15.7\pm2.1$ & $14.7\pm0.6$ & $28.0\pm1.0$ & $27.5\pm1.0$ \\
        R3M & $81.0 \pm 1.0$ & $31.0 \pm 1.7$ & $22.0 \pm 2.6$ & $19.3 \pm 4.7$ & $57.7 \pm 3.8$ & $42.2 \pm 1.0$ \\
        LIV & $85.0 \pm 5.6$ & $19.0 \pm 3.0$ & $28.3 \pm 2.9$ & $29.7 \pm 3.5$ & $51.7 \pm 2.3$ & $42.7 \pm 1.2$ \\
        DecisionNCE & $92.0 \pm 6.6$ & $18.7 \pm 4.5$ & $27.0 \pm 4.0$ & $33.3 \pm 3.5$ & $45.0 \pm7.5$ & $43.2 \pm 2.3$ \\
        \rowcolor{lightgray!50}
        AcTOL w/o BB & $84.5 \pm 3.5$ & $29.5 \pm 0.7$ & $29.5 \pm 2.1$ & \cellcolor{lightgray!50}$\mathbf{54.0} \pm2.8$ & $73.5 \pm 2.1$ & $54.2 \pm 0.8$ \\
        \rowcolor{lightgray!50}
        AcTOL & \cellcolor{lightgray!50}$\mathbf{99.5} \pm 0.7$ & \cellcolor{lightgray!50}$\mathbf{37.5} \pm 5.6$ & \cellcolor{lightgray!50}$\mathbf{37.0} \pm 4.2$ & $53.5 \pm 3.5$ & \cellcolor{lightgray!50}$\mathbf{81.5} \pm 2.1$ & \cellcolor{lightgray!50}$\mathbf{61.8} \pm 2.5$ \\
        \bottomrule
    \end{tabular}
    }
    
\end{table}

\begin{table}[!t]
    \centering
    \caption{LCBC results when dataset size$=25$ on Franka Kitchen.}
    \label{tab:franka_25}
    \resizebox{\linewidth}{!}{ 
    \begin{tabular}{lcccccc}
        \toprule
        Method & Slide Cabinet & Open Left Door & Open Microwave & Turn On Stove & Switch On Light & Average \\
        \hline
        CLIP & $66.3\pm7.5$ & $8.7\pm1.2$ & $18.7\pm1.5$ & $23.7\pm3.1$ & $38.7\pm2.3$ & $31.2\pm2.6$ \\
        R3M & $84.7 \pm 6.8$ & $35.3 \pm 4.0$ & $40.0 \pm 1.0$ & $34.0 \pm 5.3$ & $61.7 \pm 10.7$ & $51.1\pm2.8$ \\
        LIV & $91.7 \pm 5.9$ & $26.0 \pm 2.6$ & $35.0 \pm 4.6$ & $45.3 \pm 0.6$ & $61.7 \pm 3.2$ & $51.9 \pm 0.9$ \\
        DecisionNCE & $91.7 \pm 1.5$ & $27.0 \pm 10.4$ & $37.0 \pm 1.7$ & $47.3 \pm 1.2$ & $51.3 \pm 4.0$ & $50.9 \pm 2.9$ \\
        \rowcolor{lightgray!50}
        AcTOL w/o BB & $92.0 \pm 2.4$ & \cellcolor{lightgray!50}$\mathbf{37.0 \pm 5.4}$ & $40.0 \pm 2.4$ & $57.0 \pm 1.5$ & $78.0 \pm 6.2$ & $60.8 \pm 1.3$ \\
        \rowcolor{lightgray!50}
        AcTOL & \cellcolor{lightgray!50}$\mathbf{100.0} \pm 0.0$ & \cellcolor{lightgray!50}$\mathbf{37.0} \pm 7.1$ & \cellcolor{lightgray!50}$\mathbf{42.5} \pm 2.1$ & \cellcolor{lightgray!50}$\mathbf{62.5} \pm 2.1$ & \cellcolor{lightgray!50}$\mathbf{81.0} \pm 4.2$ & \cellcolor{lightgray!50}$\mathbf{64.6} \pm 0.6$ \\
        \bottomrule
    \end{tabular}
    }
    
\end{table}

\begin{table}[!t]
    \centering
    \caption{LCBC results when dataset size$=5$ on Metaworld.}
    \label{tab:metaworld_5}
    \resizebox{\linewidth}{!}{
    \begin{tabular}{lcccccc}
        \toprule
        Method & Assembly & Pick bin & Press button & Hammer & Open drawer & Average \\
        \hline
        CLIP & $48.3\pm5.7$ & $35.3\pm2.3$ & $34.3\pm4.9$ & $51.2\pm2.8$ & $91.0\pm1.0$ & $52.0\pm2.7$ \\
        R3M & $63.5 \pm 5.6$ & $33.3\pm5.1$ & $27.3\pm5.1$ & $63.2\pm7.1$ & $92.3\pm0.6$ & $55.9\pm3.9$ \\
        LIV & $61.8\pm6.5$ & $32.3\pm9.0$ & $32.7\pm3.5$ & $61.0\pm6.1$ & $\mathbf{100.0}\pm0.0$ & $57.7\pm2.1$ \\
        DecisionNCE & $54.0\pm3.6$ & $31.0\pm3.6$ & $27.7\pm5.5$ & $65.7\pm3.8$ & $\mathbf{100.0}\pm0.0$ & $55.7\pm2.8$ \\
        \rowcolor{lightgray!50}
        AcTOL w/o BB & \cellcolor{lightgray!50}$\mathbf{66.8}\pm1.4$ & $39.0\pm16.8$ & $20.7\pm1.5$ & \cellcolor{lightgray!50}$\mathbf{74.7}\pm1.5$ & \cellcolor{lightgray!50}$\mathbf{100.0}\pm0.0$ &$60.2\pm5.1$ \\
        \rowcolor{lightgray!50}
        AcTOL & $62.8\pm6.0$ & \cellcolor{lightgray!50}$\mathbf{41.0}\pm6.3$& \cellcolor{lightgray!50}$\mathbf{42.0}\pm4.5$& $69.5\pm0.7$ & \cellcolor{lightgray!50}$\mathbf{100.0}\pm0.0$ & \cellcolor{lightgray!50}$\mathbf{63.1}\pm3.9$\\
        \bottomrule
    \end{tabular}
    }
\end{table}

\begin{table}[!t]
    \centering
    \caption{LCBC results when dataset size$=15$ on Metaworld.}
    \label{tab:metaworld_15}
    \resizebox{\linewidth}{!}{
    \begin{tabular}{lcccccc}
        \toprule
        Method & Assembly & Pick bin & Press button & Hammer & Open drawer & Average \\
        \hline
        CLIP & $73.0\pm7.8$ & $40.3\pm5.5$ & $52.0\pm7.9$ & $76.0\pm5.0$ & $96.7\pm0.6$ & $67.6\pm1.5$ \\
        R3M & $80.7\pm7.6$ & $17.0\pm12.3$ & $45.0\pm4.6$ & $83.3\pm4.5$ & $94.0\pm1.0$ & $64.0\pm5.2$ \\
        LIV & $84.3\pm2.5$ & $37.0\pm8.7$ & $54.7\pm3.8$ & $81.3\pm5.9$ & $\mathbf{100.0}\pm0.0$ & $71.4\pm3.6$ \\
        DecisionNCE & $73.3\pm10.8$ & $36.7\pm5.0$ & $43.3\pm2.1$ & $83.0\pm6.0$ & $\mathbf{100.0}\pm0.0$ & $67.3\pm1.8$ \\
        \rowcolor{lightgray!50}
        AcTOL w/o BB & \cellcolor{lightgray!50}$\mathbf{94.0}\pm3.0$ & $50.3\pm18.6$ & $48.3\pm1.5$ & \cellcolor{lightgray!50}$\mathbf{90.7}\pm1.2$ & \cellcolor{lightgray!50}$\mathbf{100.0}\pm0.0$ &$76.7\pm5.3$ \\
        \rowcolor{lightgray!50}
        AcTOL & $82.5\pm0.7$ & \cellcolor{lightgray!50}$\mathbf{64.5}\pm3.2$& \cellcolor{lightgray!50}$\mathbf{65.5}\pm3.9$& $84.0\pm2.1$& \cellcolor{lightgray!50}$\mathbf{100.0}\pm0.0$ &\cellcolor{lightgray!50}$\mathbf{79.3}\pm1.6$ \\
        \bottomrule
    \end{tabular}
    }
\end{table}

\begin{table}[!t]
    \centering
    \caption{LCBC results when dataset size$=25$ on Metaworld.}
    \label{tab:metaworld_25}
    \resizebox{\linewidth}{!}{
    \begin{tabular}{lcccccc}
        \toprule
        Method & Assembly & Pick bin & Press button & Hammer & Open drawer & Average \\
        \hline
        CLIP & $69.3\pm5.7$ & $36.0\pm11.8$ & $66.0\pm2.5$ & $78.8\pm4.9$ & $99.3\pm0.6$ & $69.9\pm4.4$ \\
        R3M & $87.7\pm2.4$ & $14.7\pm11.6$ & $48.3\pm2.1$ & $89.7\pm3.5$ & $\mathbf{100.0}\pm0.0$ & $68.1\pm3.6$ \\
        LIV & $87.3\pm5.5$ & $23.7\pm6.8$ & $66.0\pm6.8$ & $89.7\pm2.5$ & $\mathbf{100.0}\pm0.0$ & $73.3\pm1.5$ \\
        DecisionNCE & $85.7\pm4.9$ & $47.0\pm12.8$ & $58.0\pm7.8$ & $88.3\pm6.7$ & $\mathbf{100.0}\pm0.0$ & $75.8\pm3.9$ \\
        \rowcolor{lightgray!50}
        AcTOL w/o BB & \cellcolor{lightgray!50}$\mathbf{93.7}\pm0.6$ & $51.7\pm11.9$ & $55.0\pm3.5$ & \cellcolor{lightgray!50}$\mathbf{93.0}\pm1.0$ & \cellcolor{lightgray!50}$\mathbf{100.0}\pm0.0$ & $78.7\pm3.5$ \\
        \rowcolor{lightgray!50}
        AcTOL & $93.5\pm3.4$& \cellcolor{lightgray!50}$\mathbf{66.0}\pm2.8$ & \cellcolor{lightgray!50}$\mathbf{76.5}\pm4.9$ & $88.5\pm3.9$& \cellcolor{lightgray!50}$\mathbf{100.0}\pm0.0$ & \cellcolor{lightgray!50}$\mathbf{84.9}\pm1.6$ \\
        \bottomrule
    \end{tabular}
    }
\end{table}

\subsection{Language-Conditioned Visual Reward Results} \label{sec:app_reward}
As shown in Figure~\ref{fig:app_reward}, we present more visualizations of Language-Conditioned Visual Reward on real-world robot manipulation videos from~\cite{rss22-robotvideo}. In Figure~\ref{fig:app_reward}(a), the robot performs two consecutive and opposing actions. Our method effectively identifies the action boundaries and generates the correct reward sequence, increasing first and then decreasing, in alignment with the given instructions. In Figures~\ref{fig:app_reward}(b)-(d), where the robot performs a single action, the robot initially moves slowly as it searches for the target. Correspondingly, the reward grows gradually. Once the robot interacts with the object and completes the task, our method captures the distinct semantic changes in the action, leading to a rapid reward increase. In Figures~\ref{fig:app_reward}(e)-(f), we test two complex actions and instructions to explore the limits of our method. In Figure~\ref{fig:app_reward}(e), the model is required to accurately distinguish between the blue and red cups to complete the task. In Figure~\ref{fig:app_reward}(f), the model needs to differentiate the orientation and face values of two dice. These scenarios impose high demands on the model's visual and semantic understanding. Our method successfully produces the correct rewards in both tasks, showcasing its potential for application in real-world, complex scenarios.

\begin{figure}[!t]
\vspace{-0.2em}
\begin{center}

\centerline{\includegraphics[scale=0.5]{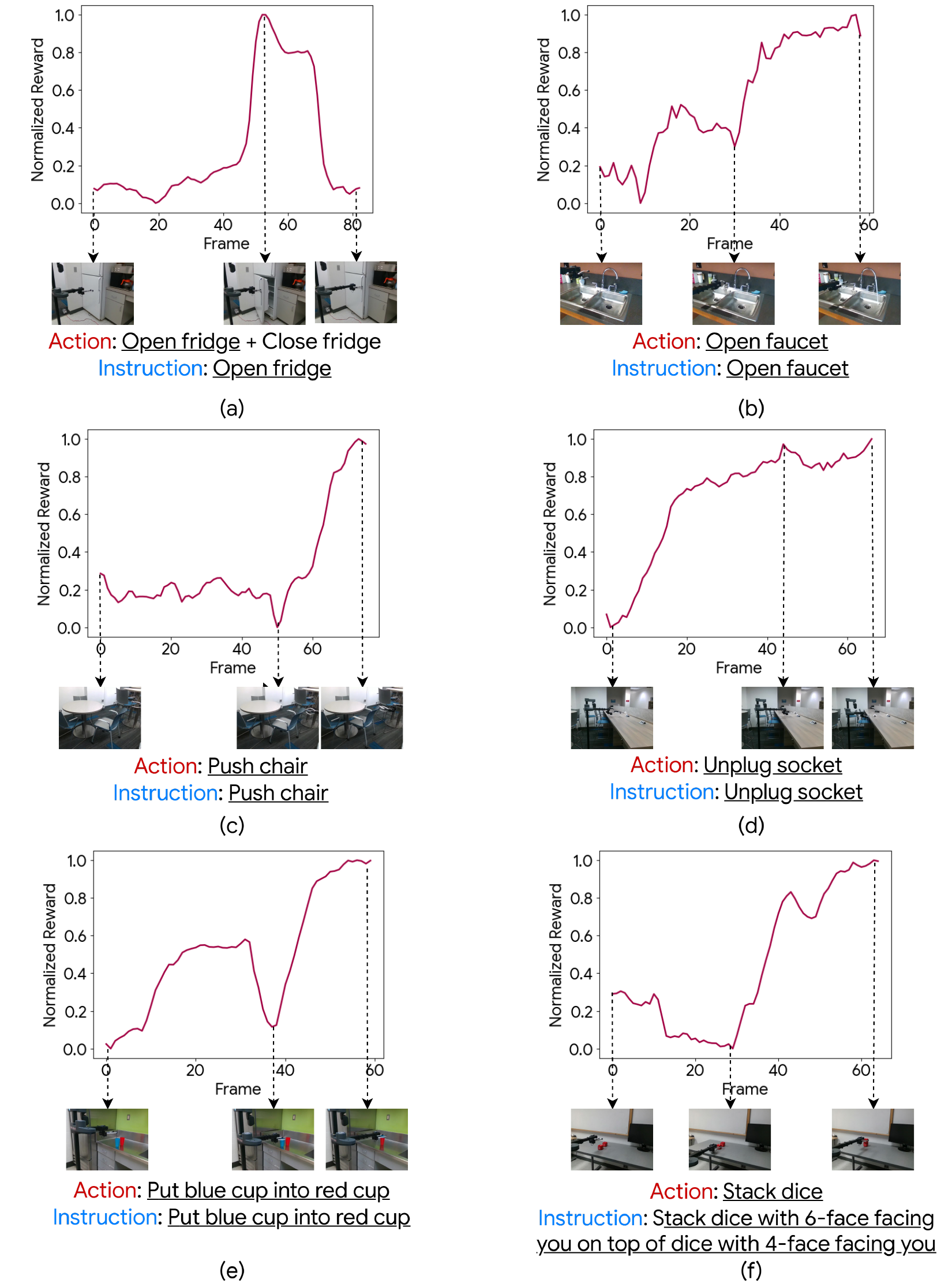}}

\caption{Reward plots for exemplar robot action videos.}
\label{fig:app_reward}
\end{center}
\end{figure}

\section{Proofs} \label{app:proof}
\subsection{Proofs of Theorem~\ref{thm:delta-ordered}} \label{sec:proof_order}
For the proof of Theorem~\ref{thm:delta-ordered}, we closely follow the approaches presented in~\cite{nips23-rnc} and adapted to our triplet case. We prove the theorem in three steps: 

\noindent(1) $\mathcal{L}^*:=\frac{1}{T(T-1)} \sum\limits_{i=1}^{T} \sum\limits_{m=1}^{M_i}n_{i, m} \log n_{i, m}$ is a lower bound of $\mathrm{L}_\mathrm{VLO}$, i.e., $\mathcal{L}_{\mathrm{VLO}} > \mathcal{L}^*$. 

\noindent(2) $\mathcal{L}^*$ is tight, i.e., for any $\epsilon > 0$, there exists representations such that $\mathcal{L}_{\mathrm{VLO}} < \mathcal{L}^* + \epsilon$. 

\noindent(3) For any $0 < \delta < 1$, there exist $\epsilon > 0$, such that if $\mathcal{L}_{\mathrm{VLO}} < \mathcal{L}^* + \epsilon$, then the learned representations satisfy VLO property.

(1) Recall that $\mathcal{L}_{\mathrm{VLO}}=\frac{1}{T}\sum\limits_{i=1}^T\frac{1}{T-1} \sum\limits_{j=1, j \neq i}^{T}-\log \frac{\exp\left(\mathfrak{R}_{i,j,l}\right)}{\sum\limits_{\mathbf{v}^k \in \mathcal{N}_{i, j}} \exp \left(\mathfrak{R}_{i,k,l}\right)}$, where $\mathcal{N}_{i, j}=\{\mathbf{v}_k | k \neq i , d_{i,j}<d_{i,k}\}$, we rewrite it as 
\begin{equation}
\begin{aligned}
& \mathcal{L}_{\mathrm{VLO}}=-\frac{1}{T(T-1)} \sum\limits_{i=1}^{T} \sum\limits_{j \in[T] \backslash\{i\}} \log \frac{\exp \left(\mathfrak{R}_{i, j,l}\right)}{\sum\limits_{k \in[T] \backslash\{i\}, d_{i, k} \geq d_{i, j}} \exp \left(\mathfrak{R}_{i, k,l}\right)} \\
& =-\frac{1}{T(T-1)} \sum\limits_{i=1}^{T} \sum\limits_{m=1}^{M_i} \sum\limits_{j \in[T] \backslash\{i\}, d_{i, j}=D_{i, m}} \log \frac{\exp \left(\mathfrak{R}_{i, j,l}\right)}{\sum\limits_{k \in[T] \backslash\{i\}, d_{i, k} \geq D_{i, m}} \exp \left(\mathfrak{R}_{i, k,l}\right)} \\
& =-\frac{1}{T(T-1)} \sum\limits_{i=1}^{T} \sum\limits_{m=1}^{M_i} \sum\limits_{j \in[T] \backslash\{i\}, d_{i, j}=D_{i, m}} \log \frac{1}{\sum\limits_{k \in[T] \backslash\{i\}, d_{i, k} \geq D_{i, m}} \exp \left(\mathfrak{R}_{i, k,l}-\mathfrak{R}_{i, j,l}\right)} \\
& =-\frac{1}{T(T-1)} \sum\limits_{i=1}^{T} \sum\limits_{m=1}^{M_i} \sum\limits_{j \in[T] \backslash\{i\}, d_{i, j}=D_{i, m}} \log \frac{1}{\sum\limits_{k \in[T] \backslash\{i\}, d_{i, k}=D_{i, m}} \exp \left(\mathfrak{R}_{i, k,l}-\mathfrak{R}_{i, j,l}\right)} \\
& -\frac{1}{T(T-1)} \sum\limits_{i=1}^{T} \sum\limits_{m=1}^{M_i} \sum\limits_{j \in[T] \backslash\{i\}, d_{i, j}=D_{i, m}} \log \frac{\sum\limits_{k \in[T] \backslash\{i\}, d_{i, k}=D_{i, m}} \exp \left(\mathfrak{R}_{i, k,l}-\mathfrak{R}_{i,j,l}\right)}{\sum\limits_{k \in[T] \backslash\{i\}, d_{i, k}\geq D_{i, m}} \exp \left(\mathfrak{R}_{i, k,l}-\mathfrak{R}_{i,j,l}\right)} \\
& =-\frac{1}{T(T-1)} \sum\limits_{i=1}^{T} \sum\limits_{m=1}^{M_i} \sum\limits_{j \in[T] \backslash\{i\}, d_{i, j}=D_{i, m}} \log \frac{\exp \left(\mathfrak{R}_{i, j,l}\right)}{\sum\limits_{k \in[T] \backslash\{i\}, d_{i, k}=D_{i, m}} \exp \left(\mathfrak{R}_{i, k,l}\right)} \\
& +\frac{1}{T(T-1)} \sum\limits_{i=1}^{T} \sum\limits_{m=1}^{M_i} \sum\limits_{j \in[T] \backslash\{i\}, d_{i, j}=D_{i, m}} \log \left(1+\frac{\sum\limits_{k \in[T] \backslash\{i\}, d_{i, k}>D_{i, m}} \exp \left(\mathfrak{R}_{i, k,l}-\mathfrak{R}_{i, j,l}\right)}{\sum\limits_{k \in[T] \backslash\{i\}, d_{i, k}=D_{i, m}} \exp \left(\mathfrak{R}_{i, k,l}-\mathfrak{R}_{i, j,l}\right)}\right) \\
& >-\frac{1}{T(T-1)} \sum\limits_{i=1}^{T} \sum\limits_{m=1}^{M_i} \sum\limits_{j \in[T] \backslash\{i\}, d_{i, j}=D_{i, m}} \log \frac{\exp \left(\mathfrak{R}_{i, j,l}\right)}{\sum\limits_{k \in[T] \backslash\{i\}, d_{i, k}=D_{i, m}} \exp \left(\mathfrak{R}_{i, k,l}\right)} .
\end{aligned}
\label{eq:A11}
\end{equation}

$\forall i \in[T], m \in\left[M_i\right]$, from Jensen's Inequality we have
\begin{equation}
\begin{aligned}
& -\sum\limits_{j \in[T] \backslash\{i\}, d_{i, j}=D_{i, m}} \log \frac{\exp \left(\mathfrak{R}_{i, j,l}\right)}{\sum\limits_{k \in[T] \backslash\{i\}, d_{i, k}=D_{i, m}} \exp \left(\mathfrak{R}_{i, k,l}\right)} \\
& \geq-n_{i, m} \log \left(\frac{1}{n_{i, m}} \sum\limits_{j \in[T] \backslash\{i\}, d_{i, j}=D_{i, m}} \frac{\exp \left(\mathfrak{R}_{i, j,l}\right)}{\sum\limits_{k \in[T] \backslash\{i\}, d_{i, k}=D_{i, m}} \exp \left(\mathfrak{R}_{i, k,l}\right)}\right)=n_{i, m} \log n_{i, m} .
\end{aligned}
\label{eq:A12}
\end{equation}

Thus, by plugging Eq.~\eqref{eq:A12} into Eq.~\eqref{eq:A11}, we have

\begin{equation} \label{eq:A13}
    \mathcal{L}_{\mathrm{VLO}}>\frac{1}{T(T-1)} \sum\limits_{i=1}^{T} \sum\limits_{m=1}^{M_i} n_{i, m} \log n_{i, m}=L^{\star}
\end{equation}

(2) We will show for $\forall \epsilon>0$, there is a set of representations where

$$
\left\{\begin{array}{l}
\mathfrak{R}_{i, j,l}>\mathfrak{R}_{i, k,l}+\gamma \text { if } d_{i, j}<d_{i, k} \\
\mathfrak{R}_{i, j,l}=\mathfrak{R}_{i, k,l} \text { if } d_{i, j}=d_{i, k}
\end{array}\right.
$$

and $\gamma:=\log \frac{T}{\min\limits_{i \in[T], m \in\left[M_i\right]}n_{i, m} \epsilon}, \forall i \in[T], j, k \in[T] \backslash\{i\}$, such that $\mathcal{L}_{\mathrm{VLO}}<L^{\star}+\epsilon$.
For such a set of representations, $\forall i \in[T], m \in\left[M_i\right], j \in\left\{[T] \backslash\{i\} \mid d_{i, j}=D_{i, m}\right\}$,

\begin{equation} \label{eq:A14}
    -\log \frac{\exp \left(\mathfrak{R}_{i, j,l}\right)}{\sum\limits_{k \in[T] \backslash\{i\}, d_{i, k}=D_{i, m}} \exp \left(\mathfrak{R}_{i, k,l}\right)}=\log n_{i, m}
\end{equation}

since $\mathfrak{R}_{i, k,l}=\mathfrak{R}_{i, j,l}$ for all $k$ such that $d_{i, k}=D_{i, m}=d_{i, j}$, and

\begin{equation} \label{eq:A15}
\begin{array}{rl} 
& \log \left(1+\frac{\sum\limits_{k \in[T] \backslash\{i\}, d_{i, k}>D_{i, m}} \exp \left(\mathfrak{R}_{i, k,l}-\mathfrak{R}_{i, j,l}\right)}{\sum\limits_{k \in[T] \backslash \{i\}, d_{i, k}=D_{i, m}} \exp \left(\mathfrak{R}_{i, k,l}-\mathfrak{R}_{i, j,l}\right)}\right)
\\ & < \log \left(1 + \frac{T \exp(-\gamma)}{n_{i,m}}\right) < \frac{T \exp(-\gamma)}{n_{i,m}} \leq \epsilon.
\end{array}
\end{equation}

As $\mathfrak{R}_{i, k,l}-\mathfrak{R}_{i, j,l}<-\gamma$ for all $k$ such that $d_{i, k}>D_{i, m}=d_{i, j}$ and $\mathfrak{R}_{i, k,l}-\mathfrak{R}_{i, j,l}=0$ for all $k$ such that $d_{i, k}=D_{i, m}=d_{i, j}$.
From Eq.~\eqref{eq:A11} we have

\begin{equation} \label{eq:A16}
\begin{aligned}
\mathcal{L}_{\mathrm{VLO}} = 
& -\frac{1}{T(T-1)} 
  \sum_{i=1}^{T} \sum_{m=1}^{M_i} 
  \sum_{\substack{j \in [T] \setminus \{i\} \\ d_{i, j}=D_{i, m}}}
  \log 
  \frac{
    \exp \left(\mathfrak{R}_{i, j, l}\right)
  }{
    \sum\limits_{\substack{k \in [T] \setminus \{i\} \\ d_{i, k}=D_{i, m}}}
    \exp \left(\mathfrak{R}_{i, k, l}\right)
  } 
\\
& + \frac{1}{T(T-1)} 
  \sum_{i=1}^{T} \sum_{m=1}^{M_i} 
  \sum_{\substack{j \in [T] \setminus \{i\} \\ d_{i, j}=D_{i, m}}}
  \log \left( 
    1 + 
    \frac{
      \sum\limits_{\substack{k \in [T] \setminus \{i\} \\ d_{i, k}>D_{i, m}}}
      \exp \left(\mathfrak{R}_{i, k, l} - \mathfrak{R}_{i, j, l}\right)
    }{
      \sum\limits_{\substack{k \in [T] \setminus \{i\} \\ d_{i, k}=D_{i, m}}}
      \exp \left(\mathfrak{R}_{i, k, l} - \mathfrak{R}_{i, j, l}\right)
    }
  \right)
\end{aligned}
\end{equation}

By plugging Eq.~\eqref{eq:A14} and Eq.~\eqref{eq:A15} into Eq.~\eqref{eq:A16} we have

\begin{equation} \label{eq:A17}
\mathcal{L}_{\mathrm{VLO}}<\frac{1}{T(T-1)} \sum\limits_{i=1}^{T} \sum\limits_{m=1}^{M_i} n_{i, m} \log n_{i, m}+\epsilon=L^{\star}+\epsilon
\end{equation}

(3) We will show $\forall 0< \delta < 1$, there is a

$$
\epsilon=\frac{1}{T(T-1)} \min \left(\min _{i \in[T], m \in\left[M_i\right]} \log \left(1+\frac{1}{n_{i, m} \exp \left(\delta+\frac{1}{\delta}\right)}\right), 2 \log \frac{1+\exp (\delta)}{2}-\delta\right)>0,
$$

such that when $\mathcal{L}_{\mathrm{VLO}}<L^*+\epsilon$, the representations satisfy VLO property.
We first show that $\left|\mathfrak{R}_{i, j,l}-\mathfrak{R}_{i, k,l}\right|<\delta$ if $d_{i, j}=d_{i, k}$, i $\in[T], j, k \in[T] \backslash\{i\}$ when $L_{\mathrm{VLO}}<L^*+\epsilon$. From Eq.~\eqref{eq:A11} we have

\begin{equation} \label{eq:A18}
\mathcal{L}_{\operatorname{VLO}}>-\frac{1}{T(T-1)} \sum\limits_{i=1}^{T} \sum\limits_{m=1}^{M_i} \sum\limits_{j \in[T] \backslash\{i\}, d_{i, j}=D_{i, m}} \log \frac{\exp \left(\mathfrak{R}_{i, j,l}\right)}{\sum\limits_{k \in[T] \backslash\{i\}, d_{i, k}-D_{i,m}} \exp \left(\mathfrak{R}_{i, k,l}\right)}
\end{equation}

Let $p_{i, m}:=\underset{j \in[T] \backslash\{i\}, d_{i, j}=D_{i, m}}{\arg \min } \mathfrak{R}_{i, j,l}, q_{i, m}:=\underset{j \in[T] \backslash\{i\}, d_{i, j}=D_{i, m}}{\arg \max } \mathfrak{R}_{i, j,l}, \zeta_{i, m}:=\mathfrak{R}_{i, p_{i, m},l}, \eta_{i, m}:=$ $s_{i, q_{i,m},l}-s_{i, p_{i, m},l}, \forall i \in[T], m \in\left[M_i\right]$, by splitting out the maximum term and the minimum term we have

\begin{equation} \label{eq:A19}
\begin{aligned}
& \mathcal{L}_\mathrm{VLO}>-\frac{1}{T(T-1)} \sum\limits_{i=1}^{T} \sum\limits_{m=1}^{M_i}\left\{\log \frac{\exp \left(\zeta_{i, m}\right)}{\sum\limits_{k \in[T] \backslash \{i\}, d_{i, i}=D_{i, m}} \exp \left(\mathfrak{R}_{i, k,l}\right)}\right. \\
& \left.+\log \frac{\exp \left(\zeta_{i, m}+\eta_{i, m}\right)}{\sum\limits_{k \in[T] \backslash \{i\}, d_{i, k}=D_{i, m}} \exp \left(\mathfrak{R}_{i, k,l}\right)}+\log \frac{\exp \left(\sum\limits_{j \in [T] \backslash\left\{i, p_{i,m}, q_{i, m}\right\}, d_{i, j}=D_{i, m}} \mathfrak{R}_{i, j,l}\right)}{\left(\sum\limits_{k \in [T] \backslash\{i\}, d_{i, k}=D_{i, m}} \exp \left(\mathfrak{R}_{i, k,l}\right)\right)^{n_{i,m}-2}}\right\} .
\end{aligned}
\end{equation}

Let $\theta_{i, m}:=\frac{1}{n_{i, m}-2} \sum\limits_{j \in[T] \backslash\left\{i, p_{i,m}, q_{i, m}\right\},d_{i, j}=D_{i, j}} \exp \left(\mathfrak{R}_{i, j,l}-\zeta_{i, m}\right)$, we have

\begin{equation}\label{eq:A20}
    -\log \frac{\exp \left(\zeta_{i, m}\right)}{\sum\limits_{k \in[T] \backslash \{i\}, d_{i, k}=D_{i, m}} \exp \left(\mathfrak{R}_{i, k,l}\right)}=\log \left(1+\exp \left(\eta_{i, m}\right)+\left(n_{i, m}-2\right) \theta_{i, m}\right)
\end{equation}
and

\begin{equation}\label{eq:A21}
    -\log \frac{\exp \left(\zeta_{i, m}+\eta_{i, m}\right)}{\sum\limits_{k \in[T] \backslash\{i\}, d_{i, k}=D_{i, m}} \exp \left(\mathfrak{R}_{i, k,l}\right)}=\log \left(1+\exp \left(\eta_{i, m}\right)+\left(n_{i, m}-2\right) \theta_{i, m}\right)-\eta_{i, m}
\end{equation}

Then, from Jensen's inequality, we know

\begin{equation} \label{eq:A22}
\begin{aligned}
\exp \bigg( 
  \sum_{\substack{
    j \in [T] \setminus \{i, p_{i, m}, q_{i, m}\} \\
    d_{i, j} = D_{i, m}
  }} \mathfrak{R}_{i, j, l}
\bigg) 
\leq 
\left(
  \frac{1}{n_{i, m} - 2} 
  \sum_{\substack{
    j \in [T] \setminus \{i, p_{i, m}, q_{i, m}\} \\
    d_{i, j} = D_{i, m}
  }} 
  \exp(\mathfrak{R}_{i, j, l})
\right)^{n_{i, m} - 2}
\end{aligned}
\end{equation}

thus

\begin{equation} \label{eq:A23}
\begin{aligned}
& -\log \frac{
  \exp \left(
    \sum_{\substack{
      j \in [T] \setminus \{i, p_{i, m}, q_{i, m}\} \\
      d_{i, j} = D_{i, m}
    }} \mathfrak{R}_{i, j, l}
  \right)
}{
  \left(
    \sum_{\substack{
      k \in [T] \setminus \{i\} \\
      d_{i, k} = D_{i, m}
    }} \exp(\mathfrak{R}_{i, k, l})
  \right)^{n_{i, m} - 2}
} 
\\
& \quad \geq 
(n_{i, m} - 2) \log \left(
  1 + \exp(\eta_{i, m}) + (n_{i, m} - 2) \theta_{i, m}
\right) 
- (n_{i, m} - 2) \log(\theta_{i, m})
\end{aligned}
\end{equation}

By plugging Eq.~\eqref{eq:A20}, Eq.~\eqref{eq:A21} and Eq.~\eqref{eq:A23} into Eq.~\eqref{eq:A19}, we have

\begin{equation} \label{eq:A24}
\begin{aligned}
\mathcal{L}_{\mathrm{VLO}} > 
\frac{1}{T(T - 1)} \sum_{i=1}^{T} \sum_{m=1}^{M_i} 
\bigg(
  & n_{i, m} \log \left(
    1 + \exp(\eta_{i, m}) + (n_{i, m} - 2) \theta_{i, m}
  \right)
  \\
  & - \eta_{i, m} 
  - (n_{i, m} - 2) \log(\theta_{i, m})
\bigg)
\end{aligned}
\end{equation}

Let $h(\theta):=n_{i, m} \log \left(1+\exp \left(\eta_{i, m}\right)+\left(n_{i, m}-2\right) \theta\right)-\eta_{i, m}-\left(n_{i, m}-2\right) \log (\theta)$. From derivative analysis we know $h(\theta)$ decreases monotonically when $\theta \in\left[1, \frac{1+\exp \left(\eta_{i, m}\right)}{2}\right]$ and increases monotonically when $\theta \in\left[\frac{1+\exp \left(\eta_{i, m}\right)}{2}, \exp \left(\eta_{i, m}\right)\right]$, thus

\begin{equation}\label{eq:A25}
    h(\theta) \geq h\left(\frac{1+\exp \left(\eta_{i, m}\right)}{2}\right)=n_{i, m} \log n_{i, m}+2 \log \frac{1+\exp \left(\eta_{i, m}\right)}{2}-\eta_{i, m} .
\end{equation}

By plugging Eq.~\eqref{eq:A25} into Eq.~\eqref{eq:A24}, we have

\begin{equation}\label{eq:A26}
\begin{aligned}
\mathcal{L}_{\mathrm{VLO}} & >\frac{1}{T(T-1)} \sum_{i=1}^{T} \sum_{m=1}^{M_i}\left(n_{i, m} \log n_{i, m}+2 \log \frac{1+\exp \left(\eta_{i, m}\right)}{2}-\eta_{i, m}\right) \\
& =L^{\star}+\frac{1}{T(T-1)} \sum_{i=1}^{T} \sum_{m=1}^{M_i}\left(2 \log \frac{1+\exp \left(\eta_{i, m}\right)}{2}-\eta_{i, m}\right)
\end{aligned}
\end{equation}

Then, since $\eta_{i, m} \geq 0$, we have $2 \log \frac{1+\exp \left(\eta_{i, m}\right)}{2}-\eta_{i, m} \geq 0$. Thus, $\forall i \in[T], m \in\left[M_i\right]$,

\begin{equation}\label{eq:A27}
    \mathcal{L}_{\mathrm{VLO}}>L^{\star}+\frac{1}{T(T-1)}\left(2 \log \frac{1+\exp \left(\eta_{i, m}\right)}{2}-\eta_{i, m}\right)
\end{equation}

If $\mathcal{L}_{\mathrm{VLO}}<L^{\star}+\epsilon \leq L^{\star}+\frac{1}{T(T-1)}\left(2 \log \frac{1+\exp (\delta)}{2}-\delta\right)$, then

\begin{equation}\label{eq:A28}
    2 \log \frac{1+\exp \left(\eta_{i, m}\right)}{2}-\eta_{i, m}<2 \log \frac{1+\exp (\delta)}{2}-\delta
\end{equation}

Since $y(x)=2 \log \frac{1+\exp (x)}{2}-x$ increases monotonically when $x>0$, we have $\eta_{i, m}<\delta$. Hence $\forall i \in[T], j, k \in[T] \backslash\{i\}$, if $d_{i, j}=d_{i, k}=D_{i, m},\left|\mathfrak{R}_{i, j,l}-\mathfrak{R}_{i, k,l}\right| \leq \eta_{i, m}<\delta$.
Next, we show $\mathfrak{R}_{i, j,l}>\mathfrak{R}_{i, k,l}+\delta$ if $d_{i, j}<d_{i, k}$ when $\mathcal{L}_\mathrm{VLO}<L^{\star}+\epsilon$.
From Eq.~\eqref{eq:A11} we have

\begin{equation} \label{eq:A29}
\begin{aligned}
\mathcal{L}_{\mathrm{VLO}} =\ 
& -\frac{1}{T(T-1)} \sum_{i=1}^{T} \sum_{m=1}^{M_i} 
  \sum_{\substack{j \in [T] \setminus \{i\} \\ d_{i,j} = D_{i,m}}} 
  \log \frac{
    \exp(\mathfrak{R}_{i,j,l})
  }{
    \sum\limits_{\substack{
      k \in [T] \setminus \{i\} \\
      d_{i,k} = D_{i,m}
    }} 
    \exp(\mathfrak{R}_{i,k,l})
  }
\\
& + \frac{1}{T(T-1)} \sum_{i=1}^{T} \sum_{m=1}^{M_i} 
  \sum_{\substack{j \in [T] \setminus \{i\} \\ d_{i,j} = D_{i,m}}} 
  \log \left(
    1 + 
    \frac{
      \sum\limits_{\substack{
        k \in [T] \setminus \{i\} \\
        d_{i,k} = D_{i,m}
      }} \exp(\mathfrak{R}_{i,k,l} - \mathfrak{R}_{i,j,l})
    }{
      \sum\limits_{\substack{
        k \in [T] \setminus \{i\} \\
        d_{i,k} > D_{i,m}
      }} \exp(\mathfrak{R}_{i,k,l} - \mathfrak{R}_{i,j,l})
    }
  \right)
\end{aligned}
\end{equation}

and combining it with Eq.~\eqref{eq:A12} we have

\begin{equation} \label{eq:A30}
\begin{aligned}
\mathcal{L}_{\mathrm{VLO}} \geq\ 
& L^{\star} + \frac{1}{T(T-1)} \sum_{i=1}^{T} \sum_{m=1}^{M_i} 
  \sum_{\substack{j \in [T] \setminus \{i\} \\ d_{i,j} = D_{i,m}}}
  \log \left(
    1 + 
    \frac{
      \sum\limits_{\substack{
        k \in [T] \setminus \{i\} \\
        d_{i,k} > D_{i,m}
      }} \exp(\mathfrak{R}_{i,k,l} - \mathfrak{R}_{i,j,l})
    }{
      \sum\limits_{\substack{
        k \in [T] \setminus \{i\} \\
        d_{i,k} = D_{i,m}
      }} \exp(\mathfrak{R}_{i,k,l} - \mathfrak{R}_{i,j,l})
    }
  \right)
\\
> &\ L^{\star} + \frac{1}{T(T-1)} \log \left(
  1 + 
  \frac{
    \exp(\mathfrak{R}_{i,k,l} - \mathfrak{R}_{i,j,l})
  }{
    \sum\limits_{\substack{
      h \in [T] \setminus \{i\} \\
      d_{i,h} = d_{i,j}
    }} \exp(\mathfrak{R}_{i,h,l} - \mathfrak{R}_{i,j,l})
  }
\right)
\end{aligned}
\end{equation}

$\forall i \in[T], j \in[T] \backslash\{i\}, k \in\left\{k \in[T] \backslash\{i\} \mid d_{i, j}<d_{i, k}\right\}$.
When $\mathcal{L}_{\mathrm{VLO}}<L^{\star}+\epsilon$, we already have $\left|\mathfrak{R}_{i, h,l}-\mathfrak{R}_{i, j,l}\right|<\delta, \forall d_{i, h}=d_{i, j}$, which derives $\mathfrak{R}_{i,h, l}-\mathfrak{R}_{i, j,l}<\delta$ and thus $\exp \left(\mathfrak{R}_{i,h, l}-\mathfrak{R}_{i, j,l}\right)<\exp (\delta)$. By putting this into Eq.~\eqref{eq:A29}, we have $\forall i \in[T], j \in$ $[T] \backslash\{i\}, k \in\left\{k \in[T] \backslash\{i\} \mid d_{i, j}<d_{i, k}\right\}$,

\begin{equation}\label{eq:A31}
    \mathcal{L}_{\mathrm{VLO}}>L^{\star}+\frac{1}{T(T-1)} \log \left(1+\frac{\exp \left(\mathfrak{R}_{i, k,l}-\mathfrak{R}_{i, j,l}\right)}{n_{i, r_{i, j}} \exp (\delta)}\right)
\end{equation}

where $r_{i, j} \in\left[M_i\right]$ is the index such that $D_{i, r_{i, j}}=d_{i, j}$.

Further, given $\mathcal{L}_{\mathrm{VLO}}<L^{\star}+\epsilon<L^{\star}+\frac{1}{T(T-1)} \log \left(1+\frac{1}{n_{i, r_{i, j}} \exp \left(\delta+\frac{1}{\delta}\right)}\right)$, we have

\begin{equation}\label{eq:A32}
    \log \left(1+\frac{\exp \left(\mathfrak{R}_{i, k,l}-\mathfrak{R}_{i, j,l}\right)}{n_{i, r_{i, j}} \exp (\delta)}\right)<\log \left(1+\frac{1}{n_{i, r_{i, j}} \exp \left(\delta+\frac{1}{\delta}\right)}\right)
\end{equation}

which derives $\mathfrak{R}_{i, j,l}>\mathfrak{R}_{i, k,l}+\frac{1}{\delta}, \forall i \in[T], j \in[T] \backslash\{i\}, k \in\left\{[T] \backslash\{i\} \mid d_{i, j}<d_{i, k}\right\}$.
Finally, $\forall i \in[T], j, k \in[T] \backslash\{i\}, \mathfrak{R}_{i, j,l}<\mathfrak{R}_{i, k,l}-\frac{1}{\delta}$ if $d_{i, j}>d_{i, k}$ directly follows from $\mathfrak{R}_{i, j,l}> \mathfrak{R}_{i, k,l}+\frac{1}{\delta}$ if $d_{i, j}<d_{i, k}$.
\qed
\subsection{Proofs of Theorem~\ref{thm:continuity}}
\label{sec:proof_continuity}
\paragraph{Setup and Assumptions.}
To provide the vision-language continuity, we first assume that the frame embeddings $\{\mathbf{v}_t\}$, where $t\in[1, T]$ are regularized under a Brownian Bridge process $\mathbf{B}(t)$ as discussed in Section~\ref{sec:bb}, where the transition density for any intermediate time $t \in [n(i), n(j)]$ within a sampled interval is given as:
\begin{equation}
\mathbf{B}(t) \sim \mathcal{N}\left(\mathbb{E}[\mathbf{B}(t)], \mathrm{Var}[\mathbf{B}(t)]\right),
\end{equation}
with:
\begin{equation}
\mathbb{E}[\mathbf{B}(t)] = \mathbf{v}_i + \frac{t - n(i)}{n(j) - n(i)} (\mathbf{v}_j - \mathbf{v}_i),~ \mathrm{Var}[\mathbf{B}(t)] = \frac{(t - n(i))(n(j) - t)}{n(j) - n(i)}.
\end{equation}
 All time steps \( t \in [1, T] \) are covered by at least one sampled interval, ensuring the entire video sequence satisfies the Brownian Bridge regularization. Now, let \( \mathbf{v}_k, \mathbf{v}_l \in\mathbb{R}^d\) be arbitrary embeddings, \textit{not necessarily} the endpoints \( \mathbf{v}_i \) and \( \mathbf{v}_j \) of a sampled interval. These embeddings fall within the \textit{union} $\mathfrak{U}$ of all sampled local intervals. Without loss of generality, here we can identify the interval \([n(i), n(j)]\in\mathfrak{U}\) from the union containing \( \mathbf{v}_k \) and \( \mathbf{v}_l \).

\noindent \textbf{Bounding Local Continuity.} Recall that semantic alignment score $\mathfrak{R}(\mathbf{v}_k, \mathbf{v}_l, \mathbf{l})$ is  defined as:
\[
\mathfrak{R}(\mathbf{v}_k, \mathbf{v}_l, \mathbf{l}) = -\|\operatorname{sim}(\mathbf{v}_k, \mathbf{l}) - \operatorname{sim}(\mathbf{v}_l, \mathbf{l})\|_2,
\]
where \( \operatorname{sim}(\cdot) \) is Lipschitz continuous with constant \( C > 0 \) when embeddings are normalized as unit vectors. By the Lipschitz continuity of \( \operatorname{sim}(\cdot) \), we have:
\[
\|\operatorname{sim}(\mathbf{v}_k, \mathbf{l}) - \operatorname{sim}(\mathbf{v}_l, \mathbf{l})\|_2 \leq C \cdot \|\mathbf{v}_k - \mathbf{v}_l\|_2.
\]

To ensure the continuity of \( \mathfrak{R} \), we must bound \( \|\mathbf{v}_k - \mathbf{v}_l\|_2 \). Under the Brownian Bridge regularization, the embeddings are aligned with the mean trajectory \( \mathbb{E}[\mathbf{B}(t)] \), and deviations are constrained by the variance \( \mathrm{Var}[\mathbf{B}(t)] \). Specifically:
\[
\|\mathbf{v}_t - \mathbb{E}[\mathbf{B}(t)]\|_2^2 \leq \lambda \cdot \mathrm{Var}[\mathbf{B}(t)],
\]
where \( \lambda > 0 \) depends on the strength of the Brownian Bridge loss \( \mathcal{L}_{\mathrm{BB}} \). Below we omit $\lambda$ for simplicty. Substituting the variance:
\[
\mathrm{Var}[\mathbf{B}(t)] = \frac{(t - n(i))(n(j) - t)}{n(j) - n(i)}.
\]

\noindent \textbf{Bounding Pairwise Distance.} The total pairwise distance between \( \mathbf{v}_k \) and \( \mathbf{v}_l \) can be expressed as:
\[
\|\mathbf{v}_k - \mathbf{v}_l\|_2 \leq \|\mathbb{E}[\mathbf{B}(k)] - \mathbb{E}[\mathbf{B}(l)]\|_2 + \sqrt{ \mathrm{Var}[\mathbf{B}(k)]} + \sqrt{ \mathrm{Var}[\mathbf{B}(l)]}.
\]

Since the mean trajectory \( \mathbb{E}[\mathbf{B}(t)] \) is linear within the interval \([n(i), n(j)]\), we have:
\[
\|\mathbb{E}[\mathbf{B}(k)] - \mathbb{E}[\mathbf{B}(l)]\|_2 \leq \frac{|k - l|}{n(j) - n(i)} \|\mathbf{v}_j - \mathbf{v}_i\|_2.
\]

Combining these bounds, now we can rewrite into the following inequality:
\[
\|\mathbf{v}_k - \mathbf{v}_l\|_2 \leq \frac{|k - l|}{n(j) - n(i)} \|\mathbf{v}_j - \mathbf{v}_i\|_2 + \sqrt{\frac{(k - n(i))(n(j) - k)}{n(j) - n(i)}} + \sqrt{\frac{(l - n(i))(n(j) - l)}{n(j) - n(i)}}.
\]


For the variance terms, the Brownian Bridge process achieves its maximum variance at the midpoint $t=\frac{n(i) + n(j)}{2}$. This gives us,
\[\mathrm{Var}[\mathbf{B}(t_{\operatorname{max})}] = \frac{n(j)-n(i)}{4}, ~
\|\mathbf{v}_k - \mathbf{v}_l\|_2 \leq 2\frac{|k - l|}{n(j) - n(i)} +  \sqrt{ (n(j) - n(i))}.
\]
\noindent \textbf{Bounding Semantic Alignment Score.} Finally, by substituting this bound into the Lipschitz continuity of $\operatorname{sim}$, we obtain,
\[
\left|\mathfrak{R}(\mathbf{v}_k, \mathbf{v}_l, \mathbf{l})\right| \leq C \cdot \left( \frac{2|k - l|}{n(j)-n(i)} + \sqrt{(n(j) - n(i))} \right).
\]
To ensure \( \left|\mathfrak{R}(\mathbf{v}_k, \mathbf{v}_l, \mathbf{l})\right| < \epsilon \), we require:
\[
C \cdot \left( 2\frac{|k - l|}{n(j) - n(i)} + \sqrt{n(j) - n(i)} \right) < \epsilon.
\]
Here, we consider these two terms respectively:
    \[
    2C\frac{|k - l|}{n(j) - n(i)} < \frac{\epsilon}{2},~ C \sqrt{n(j) - n(i)} < \frac{\epsilon}{2},
    \]
    which gives:
    \[
    |k - l| < \delta_1 = \frac{\epsilon \cdot (n(j) - n(i))}{4C},~ n(j) - n(i) < \left(\frac{\epsilon}{2C}\right)^2.
    \]

Combining these conditions, we choose:
\[
\delta = \min\left( \frac{\epsilon \cdot (n(j) - n(i))}{4C}, \frac{\epsilon^2}{4C^2} \right).
\]

\paragraph{Final Conclusion.}
For any given \( \epsilon > 0 \), setting \( \delta = \min\left( \frac{\epsilon \cdot (n(j) - n(i))}{4C}, \frac{\epsilon^2}{4C^2} \right) \) ensures:
\[
\|\mathbf{v}_k - \mathbf{v}_l\|_2 < \delta \quad \Rightarrow \quad \left|\mathfrak{R}(\mathbf{v}_k, \mathbf{v}_l, \mathbf{l})\right| < \epsilon.
\]
\qed

\subsection{Proofs of Theorem \ref{thm:robustness}}\label{sec:proof_robustness}
From the definition of the semantic alignment score, we have:
\[
\mathfrak{R}(\mathbf{v}_i, \mathbf{v}_j, \mathbf{l}) = -|\operatorname{sim}(\mathbf{v}_i, \mathbf{l}) - \operatorname{sim}(\mathbf{v}_j, \mathbf{l})|,~ \mathfrak{R}(\mathbf{v}_i, \mathbf{v}_j, \mathbf{l}') = -|\operatorname{sim}(\mathbf{v}_i, \mathbf{l}') - \operatorname{sim}(\mathbf{v}_j, \mathbf{l}')|.
\]
The difference in scores can be bounded using the reverse triangle inequality:
\[
|\mathfrak{R}(\mathbf{v}_i, \mathbf{v}_j, \mathbf{l}') - \mathfrak{R}(\mathbf{v}_i, \mathbf{v}_j, \mathbf{l})| \leq |(\operatorname{sim}(\mathbf{v}_i, \mathbf{l}') - \operatorname{sim}(\mathbf{v}_j, \mathbf{l}')) - (\operatorname{sim}(\mathbf{v}_i, \mathbf{l}) - \operatorname{sim}(\mathbf{v}_j, \mathbf{l})) |.
\]

Simplifying the inequalities above, it gives us:
\[
|\mathfrak{R}(\mathbf{v}_i, \mathbf{v}_j, \mathbf{l}') - \mathfrak{R}(\mathbf{v}_i, \mathbf{v}_j, \mathbf{l})| \leq |\operatorname{sim}(\mathbf{v}_i, \mathbf{l}') - \operatorname{sim}(\mathbf{v}_i, \mathbf{l})| + |\operatorname{sim}(\mathbf{v}_j, \mathbf{l}') - \operatorname{sim}(\mathbf{v}_j, \mathbf{l})|.
\]

By the Lipschitz continuity of \( \operatorname{sim} \), we have: for some constant $C>0$,
\[
|\operatorname{sim}(\mathbf{v}_i, \mathbf{l}') - \operatorname{sim}(\mathbf{v}_i, \mathbf{l})| \leq C \|\mathbf{l}' - \mathbf{l}\|_2, |\operatorname{sim}(\mathbf{v}_j, \mathbf{l}') - \operatorname{sim}(\mathbf{v}_j, \mathbf{l})| \leq C \|\mathbf{l}' - \mathbf{l}\|_2.
\]
Substituting these bounds and considering \( \|\mathbf{l}' - \mathbf{l}\|_2 \leq \delta_l \)
\begin{equation}
    \begin{split}
|\mathfrak{R}(\mathbf{v}_i, \mathbf{v}_j, \mathbf{l}') - \mathfrak{R}(\mathbf{v}_i, \mathbf{v}_j, \mathbf{l})| &\leq 2C \|\mathbf{l}' - \mathbf{l}\|_2 \leq 2C \delta_l.
\end{split}
\end{equation}
\qed

\section{Broader Impacts}\label{app:impact and limit}
We introduce Action Temporal Coherence Learning (AcTOL), a vision-language pretraining framework aimed at improving the generalization capabilities of embodied agents in a variety of manipulation tasks. By learning from large-scale human action videos, AcTOL helps agents acquire temporally consistent representations aligned with natural language, which can support more flexible and data-efficient robotic learning. However, some potential risks should be acknowledged. If AcTOL is trained on video data that contains societal biases or stereotypes, those patterns may be reflected in the model’s behavior. For instance, if certain groups or actions are underrepresented or portrayed inaccurately, the resulting agents could behave in ways that are inappropriate or unreliable in diverse real-world settings. While these challenges are common across many data-driven systems in robotics and vision-language learning, we believe future work should explore strategies such as dataset auditing, fairness-aware training, and improved transparency to support more responsible and robust deployment.

\end{document}